\documentclass[conference,dvipsnames]{IEEEtran}
\usepackage{graphicx,url}
\graphicspath{ {./images/} }
\usepackage{amssymb,amsmath,mathtools}
\usepackage{tikz,pgfplots}
\usepackage{pgfplots}
\usepackage{pgfplotstable}
\usetikzlibrary{pgfplots.groupplots}
\usepgfplotslibrary{colorbrewer}
\pgfplotsset{compat = 1.15, cycle list/Set1-8}
\usetikzlibrary{pgfplots.statistics, pgfplots.colorbrewer}
\usetikzlibrary{pgfplots.groupplots}
\usetikzlibrary{shapes.geometric,backgrounds,patterns, trees}
\usetikzlibrary{3d,decorations.text,shapes.arrows,positioning,fit,backgrounds}
\usetikzlibrary{positioning, decorations.pathmorphing, shapes}
\usetikzlibrary{decorations.pathreplacing}
\usetikzlibrary{shapes.geometric,backgrounds,patterns, trees}
\usetikzlibrary{spy}
\usetikzlibrary{arrows.meta,
                bending,
                intersections,
                quotes,
                shapes.geometric}
              \usetikzlibrary{automata, positioning}
              \usepgfplotslibrary{fillbetween}
\usetikzlibrary{shapes,arrows}
\usetikzlibrary{arrows.meta}
\usetikzlibrary{positioning}
\tikzset{set/.style={draw,circle,inner sep=0pt,align=center}}
\usetikzlibrary{automata, positioning}
  \usetikzlibrary{shapes,shadows}
  \tikzstyle{abstractbox} = [draw=black, fill=white, rectangle,
  inner sep=10pt, style=rounded corners, drop shadow={fill=black,
  opacity=1}]
\tikzstyle{abstracttitle} =[fill=white]
\usetikzlibrary{calc,positioning,shapes.geometric}
\usetikzlibrary{arrows.meta,arrows}
\usetikzlibrary{matrix}

\colorlet{myRed}{red!20}
\tikzset{
  rows/.style 2 args={/utils/temp/.style={row ##1/.append style={nodes={#2}}},
    /utils/temp/.list={#1}},
  columns/.style 2 args={/utils/temp/.style={column ##1/.append style={nodes={#2}}},
    /utils/temp/.list={#1}}}
\usetikzlibrary{backgrounds,calc,shadings,shapes.arrows,shapes.symbols,shadows}
\definecolor{switch}{HTML}{006996}
\pgfkeys{/pgf/.cd,
  parallelepiped offset x/.initial=2mm,
  parallelepiped offset y/.initial=2mm
}
\pgfdeclareshape{parallelepiped}
{
  \inheritsavedanchors[from=rectangle] 
  \inheritanchorborder[from=rectangle]
  \inheritanchor[from=rectangle]{north}
  \inheritanchor[from=rectangle]{north west}
  \inheritanchor[from=rectangle]{north east}
  \inheritanchor[from=rectangle]{center}
  \inheritanchor[from=rectangle]{west}
  \inheritanchor[from=rectangle]{east}
  \inheritanchor[from=rectangle]{mid}
  \inheritanchor[from=rectangle]{mid west}
  \inheritanchor[from=rectangle]{mid east}
  \inheritanchor[from=rectangle]{base}
  \inheritanchor[from=rectangle]{base west}
  \inheritanchor[from=rectangle]{base east}
  \inheritanchor[from=rectangle]{south}
  \inheritanchor[from=rectangle]{south west}
  \inheritanchor[from=rectangle]{south east}
  \backgroundpath{
    \southwest \pgf@xa=\pgf@x \pgf@ya=\pgf@y
    \northeast \pgf@xb=\pgf@x \pgf@yb=\pgf@y
    \pgfmathsetlength\pgfutil@tempdima{\pgfkeysvalueof{/pgf/parallelepiped
      offset x}}
    \pgfmathsetlength\pgfutil@tempdimb{\pgfkeysvalueof{/pgf/parallelepiped
      offset y}}
    \def\ppd@offset{\pgfpoint{\pgfutil@tempdima}{\pgfutil@tempdimb}}
    \pgfpathmoveto{\pgfqpoint{\pgf@xa}{\pgf@ya}}
    \pgfpathlineto{\pgfqpoint{\pgf@xb}{\pgf@ya}}
    \pgfpathlineto{\pgfqpoint{\pgf@xb}{\pgf@yb}}
    \pgfpathlineto{\pgfqpoint{\pgf@xa}{\pgf@yb}}
    \pgfpathclose
    \pgfpathmoveto{\pgfqpoint{\pgf@xb}{\pgf@ya}}
    \pgfpathlineto{\pgfpointadd{\pgfpoint{\pgf@xb}{\pgf@ya}}{\ppd@offset}}
    \pgfpathlineto{\pgfpointadd{\pgfpoint{\pgf@xb}{\pgf@yb}}{\ppd@offset}}
    \pgfpathlineto{\pgfpointadd{\pgfpoint{\pgf@xa}{\pgf@yb}}{\ppd@offset}}
    \pgfpathlineto{\pgfqpoint{\pgf@xa}{\pgf@yb}}
    \pgfpathmoveto{\pgfqpoint{\pgf@xb}{\pgf@yb}}
    \pgfpathlineto{\pgfpointadd{\pgfpoint{\pgf@xb}{\pgf@yb}}{\ppd@offset}}
  }
}

\makeatletter
\tikzset{anchor/.append code=\let\tikz@auto@anchor\relax,
  add font/.code=%
    \expandafter\def\expandafter\tikz@textfont\expandafter{\tikz@textfont#1},
  left delimiter/.style 2 args={append after command={\tikz@delimiter{south east}
    {south west}{every delimiter,every left delimiter,#2}{south}{north}{#1}{.}{\pgf@y}}}}
\tikzstyle{sms} = [rectangle callout, draw,very thick, rounded corners, minimum height=20pt]
\makeatletter
\tikzset{anchor/.append code=\let\tikz@auto@anchor\relax,
  add font/.code=%
    \expandafter\def\expandafter\tikz@textfont\expandafter{\tikz@textfont#1},
  left delimiter/.style 2 args={append after command={\tikz@delimiter{south east}
    {south west}{every delimiter,every left delimiter,#2}{south}{north}{#1}{.}{\pgf@y}}}}
\tikzstyle{sms} = [rectangle callout, draw,very thick, rounded corners, minimum height=20pt]
\usetikzlibrary{positioning,calc}
\tikzstyle{block} = [rectangle, draw,
text width=10.5em, text centered, rounded corners, minimum height=4em]
\tikzstyle{line} = [draw, -latex]
\tikzset{l3 switch/.style={
    parallelepiped,fill=switch, draw=white,
    minimum width=0.75cm,
    minimum height=0.75cm,
    parallelepiped offset x=1.75mm,
    parallelepiped offset y=1.25mm,
    path picture={
      \node[fill=white,
        circle,
        minimum size=6pt,
        inner sep=0pt,
        append after command={
          \pgfextra{
            \foreach \angle in {0,45,...,360}
            \draw[-latex,fill=white] (\tikzlastnode.\angle)--++(\angle:2.25mm);
          }
        }
      ]
       at ([xshift=-0.75mm,yshift=-0.5mm]path picture bounding box.center){};
    }
  },
  ports/.style={
    line width=0.3pt,
    top color=gray!20,
    bottom color=gray!80
  },
  rack switch/.style={
    parallelepiped,fill=white, draw,
    minimum width=1.25cm,
    minimum height=0.25cm,
    parallelepiped offset x=2mm,
    parallelepiped offset y=1.25mm,
    xscale=-1,
    path picture={
      \draw[top color=gray!5,bottom color=gray!40]
      (path picture bounding box.south west) rectangle
      (path picture bounding box.north east);
      \coordinate (A-west) at ([xshift=-0.2cm]path picture bounding box.west);
      \coordinate (A-center) at ($(path picture bounding box.center)!0!(path
        picture bounding box.south)$);
      \foreach \x in {0.275,0.525,0.775}{
        \draw[ports]([yshift=-0.05cm]$(A-west)!\x!(A-center)$)
          rectangle +(0.1,0.05);
        \draw[ports]([yshift=-0.125cm]$(A-west)!\x!(A-center)$)
          rectangle +(0.1,0.05);
       }
      \coordinate (A-east) at (path picture bounding box.east);
      \foreach \x in {0.085,0.21,0.335,0.455,0.635,0.755,0.875,1}{
        \draw[ports]([yshift=-0.1125cm]$(A-east)!\x!(A-center)$)
          rectangle +(0.05,0.1);
      }
    }
  },
  server/.style={
    parallelepiped,
    fill=white, draw,
    minimum width=0.35cm,
    minimum height=0.75cm,
    parallelepiped offset x=3mm,
    parallelepiped offset y=2mm,
    xscale=-1,
    path picture={
      \draw[top color=gray!5,bottom color=gray!40]
      (path picture bounding box.south west) rectangle
      (path picture bounding box.north east);
      \coordinate (A-center) at ($(path picture bounding box.center)!0!(path
        picture bounding box.south)$);
      \coordinate (A-west) at ([xshift=-0.575cm]path picture bounding box.west);
      \draw[ports]([yshift=0.1cm]$(A-west)!0!(A-center)$)
        rectangle +(0.2,0.065);
      \draw[ports]([yshift=0.01cm]$(A-west)!0.085!(A-center)$)
        rectangle +(0.15,0.05);
      \fill[black]([yshift=-0.35cm]$(A-west)!-0.1!(A-center)$)
        rectangle +(0.235,0.0175);
      \fill[black]([yshift=-0.385cm]$(A-west)!-0.1!(A-center)$)
        rectangle +(0.235,0.0175);
      \fill[black]([yshift=-0.42cm]$(A-west)!-0.1!(A-center)$)
        rectangle +(0.235,0.0175);
    }
  },
}

\usetikzlibrary{calc, shadings, shadows, shapes.arrows}
\tikzset{cross/.style={cross out, draw=black, minimum size=2*(#1-\pgflinewidth), inner sep=0pt, outer sep=0pt},
cross/.default={1pt}}
\tikzset{%
  interface/.style={draw, rectangle, rounded corners, font=\LARGE\sffamily},
  ethernet/.style={interface, fill=yellow!50},
  serial/.style={interface, fill=green!70},
  speed/.style={sloped, anchor=south, font=\large\sffamily},
  route/.style={draw, shape=single arrow, single arrow head extend=4mm,
    minimum height=1.7cm, minimum width=3mm, white, fill=switch!20,
    drop shadow={opacity=.8, fill=switch}, font=\tiny}
}

\usepackage{float}
\definecolor{bluetwo}{RGB}{189, 213, 234}
\definecolor{bluethree}{RGB}{165, 193, 224}
\definecolor{bluefour}{RGB}{141, 169, 200}

\usepackage{amsthm}
\theoremstyle{plain}
\newtheorem{proposition}{Proposition}
\newtheorem{definition}{Definition}
\newtheorem{corollary}{Corollary}

\newtheorem{remark}{Remark}
\newtheorem{assumption}{Assumption}
\usepackage{booktabs}
\usepackage{colortbl}
\definecolor{lightgray}{gray}{0.9}
\allowdisplaybreaks

\definecolor{lightgray}{gray}{0.9}
\definecolor{lightgray}{gray}{0.9}
\definecolor{lightgreen}{rgb}{0.88, 1, 0.88}
\definecolor{lightred}{rgb}{1, 0.88, 0.88}
\definecolor{lightblue}{rgb}{0.88, 0.94, 1}
\definecolor{lightorange}{rgb}{1, 0.94, 0.88}

\usepackage{bbm}
\usepackage{bm}

\DeclareMathOperator*{\argmin}{arg\,min}

\definecolor{gray2}{HTML}{ededed}
\definecolor{gray3}{HTML}{F5F5F5}
\definecolor{RoyalAzure}{rgb}{0.0, 0.22, 0.66}
\definecolor{lightgray}{gray}{0.9}
\definecolor{lightgray}{gray}{0.9}
\definecolor{lightgreen}{rgb}{0.88, 1, 0.88}
\definecolor{lightred}{rgb}{1, 0.88, 0.88}
\definecolor{lightblue}{rgb}{0.88, 0.94, 1}
\definecolor{lightorange}{rgb}{1, 0.94, 0.88}

\makeatletter
\tikzset{
    database/.style={
        path picture={
            \draw (0, 1.5*\database@segmentheight) circle [x radius=\database@radius,y radius=\database@aspectratio*\database@radius];
            \draw (-\database@radius, 0.5*\database@segmentheight) arc [start angle=180,end angle=360,x radius=\database@radius, y radius=\database@aspectratio*\database@radius];
            \draw (-\database@radius,-0.5*\database@segmentheight) arc [start angle=180,end angle=360,x radius=\database@radius, y radius=\database@aspectratio*\database@radius];
            \draw (-\database@radius,1.5*\database@segmentheight) -- ++(0,-3*\database@segmentheight) arc [start angle=180,end angle=360,x radius=\database@radius, y radius=\database@aspectratio*\database@radius] -- ++(0,3*\database@segmentheight);
        },
        minimum width=2*\database@radius + \pgflinewidth,
        minimum height=3*\database@segmentheight + 2*\database@aspectratio*\database@radius + \pgflinewidth,
    },
    database segment height/.store in=\database@segmentheight,
    database radius/.store in=\database@radius,
    database aspect ratio/.store in=\database@aspectratio,
    database segment height=0.1cm,
    database radius=0.25cm,
    database aspect ratio=0.35,
  }
\makeatother

\usetikzlibrary{backgrounds}
\usetikzlibrary{patterns}
\pgfmathdeclarefunction{gauss}{3}{%
  \pgfmathparse{1/(#3*sqrt(2*pi))*exp(-((#1-#2)^2)/(2*#3^2))}%
}

\usetikzlibrary{spy}
\usetikzlibrary{arrows.meta,
                bending,
                intersections,
                quotes,
                shapes.geometric}
              \usetikzlibrary{automata, positioning}
              \usepgfplotslibrary{fillbetween}

\tikzset{%
  interface/.style={draw, rectangle, rounded corners, font=\LARGE\sffamily},
  ethernet/.style={interface, fill=yellow!50},
  serial/.style={interface, fill=green!70},
  speed/.style={sloped, anchor=south, font=\large\sffamily},
  route/.style={draw, shape=single arrow, single arrow head extend=4mm,
    minimum height=1.7cm, minimum width=3mm, white, fill=switch!20,
    drop shadow={opacity=.8, fill=switch}, font=\tiny}
}
\definecolor{switch}{HTML}{006996}

\tikzset{l3 switch/.style={
    parallelepiped,fill=switch, draw=white,
    minimum width=0.75cm,
    minimum height=0.75cm,
    parallelepiped offset x=1.75mm,
    parallelepiped offset y=1.25mm,
    path picture={
      \node[fill=white,
      circle,
      minimum size=6pt,
      inner sep=0pt,
      append after command={
        \pgfextra{
          \foreach \angle in {0,45,...,360}
          \draw[-latex,fill=white] (\tikzlastnode.\angle)--++(\angle:2.25mm);
        }
      }
      ]
      at ([xshift=-0.75mm,yshift=-0.5mm]path picture bounding box.center){};
    }
  },
  ports/.style={
    line width=0.3pt,
    top color=gray!20,
    bottom color=gray!80
  },
  rack switch/.style={
    parallelepiped,fill=white, draw,
    minimum width=1.25cm,
    minimum height=0.25cm,
    parallelepiped offset x=2mm,
    parallelepiped offset y=1.25mm,
    xscale=-1,
    path picture={
      \draw[top color=gray!5,bottom color=gray!40]
      (path picture bounding box.south west) rectangle
      (path picture bounding box.north east);
      \coordinate (A-west) at ([xshift=-0.2cm]path picture bounding box.west);
      \coordinate (A-center) at ($(path picture bounding box.center)!0!(path
      picture bounding box.south)$);
      \foreach \x in {0.275,0.525,0.775}{
        \draw[ports]([yshift=-0.05cm]$(A-west)!\x!(A-center)$)
        rectangle +(0.1,0.05);
        \draw[ports]([yshift=-0.125cm]$(A-west)!\x!(A-center)$)
        rectangle +(0.1,0.05);
      }
      \coordinate (A-east) at (path picture bounding box.east);
      \foreach \x in {0.085,0.21,0.335,0.455,0.635,0.755,0.875,1}{
        \draw[ports]([yshift=-0.1125cm]$(A-east)!\x!(A-center)$)
        rectangle +(0.05,0.1);
      }
    }
  },
  server/.style={
    parallelepiped,
    fill=white, draw,
    minimum width=0.35cm,
    minimum height=0.75cm,
    parallelepiped offset x=3mm,
    parallelepiped offset y=2mm,
    xscale=-1,
    path picture={
      \draw[top color=gray!5,bottom color=gray!40]
      (path picture bounding box.south west) rectangle
      (path picture bounding box.north east);
      \coordinate (A-center) at ($(path picture bounding box.center)!0!(path
      picture bounding box.south)$);
      \coordinate (A-west) at ([xshift=-0.575cm]path picture bounding box.west);
      \draw[ports]([yshift=0.1cm]$(A-west)!0!(A-center)$)
      rectangle +(0.2,0.065);
      \draw[ports]([yshift=0.01cm]$(A-west)!0.085!(A-center)$)
      rectangle +(0.15,0.05);
      \fill[black]([yshift=-0.35cm]$(A-west)!-0.1!(A-center)$)
      rectangle +(0.235,0.0175);
      \fill[black]([yshift=-0.385cm]$(A-west)!-0.1!(A-center)$)
      rectangle +(0.235,0.0175);
      \fill[black]([yshift=-0.42cm]$(A-west)!-0.1!(A-center)$)
      rectangle +(0.235,0.0175);
    }
  },
}

\makeatletter
\pgfdeclareradialshading[tikz@ball]{cloud}{\pgfpoint{-0.275cm}{0.4cm}}{%
  color(0cm)=(tikz@ball!75!white);
  color(0.1cm)=(tikz@ball!85!white);
  color(0.2cm)=(tikz@ball!95!white);
  color(0.7cm)=(tikz@ball!89!black);
  color(1cm)=(tikz@ball!75!black)
}
\tikzoption{cloud color}{\pgfutil@colorlet{tikz@ball}{#1}%
  \def\tikz@shading{cloud}\tikz@addmode{\tikz@mode@shadetrue}}
\makeatother

\tikzset{my cloud/.style={
     cloud, draw, aspect=2,
     cloud color={gray!5!white}
  }
}

\tikzset{
  mybackground18/.style={execute at end picture={
      \begin{scope}[on background layer]
        \draw[black, fill=gray3, rounded corners=3.5ex] (current bounding box.south west)
        rectangle (current bounding box.north east);
        \node[draw,fill=white,ellipse,anchor=west,inner sep=1pt,minimum width=4ex] at (current bounding box.north
        west){#1};
      \end{scope}
    }}
}

\usepackage{listofitems} 
\usetikzlibrary{arrows.meta} 
\usepackage[outline]{contour} 
\contourlength{1.4pt}

\colorlet{myred}{red!80!black}
\colorlet{myblue}{blue!80!black}
\colorlet{mygreen}{green!60!black}
\colorlet{myorange}{orange!70!red!60!black}
\colorlet{mydarkred}{red!30!black}
\colorlet{mydarkblue}{blue!40!black}
\colorlet{mydarkgreen}{green!30!black}

\tikzset{
  >=latex, 
  node/.style={thick,circle,draw=myblue,minimum size=22,inner sep=0.5,outer sep=0.6},
  node in/.style={node,black!20!black,draw=mygreen!30!black,fill=black!20},
  node hidden/.style={node,black!20!black,draw=myblue!30!black,fill=black!20},
  node convol/.style={node,black!20!black,draw=myorange!30!black,fill=black!20},
  node out/.style={node,red!20!black,draw=myred!30!black,fill=black!20},
  connect/.style={thick,Blue!100}, 
  connect arrow/.style={-{Latex[length=4,width=3.5]},thick,mydarkblue,shorten <=0.5,shorten >=1},
  node 1/.style={node in}, 
  node 2/.style={node hidden},
  node 3/.style={node out}
}

\usepackage{makecell}
\usepackage[most]{tcolorbox}

\usepackage[titlenumbered,ruled,linesnumbered]{algorithm2e}
\SetAlCapNameFnt{\footnotesize}
\SetAlCapFnt{\footnotesize}

\usepackage{pifont}
\newcommand{\cmark}{\textcolor{OliveGreen}{\ding{51}}} 
\newcommand{\xmark}{\textcolor{Red}{\ding{55}}}  
\newcommand{\qmark}{\textcolor{Blue}{\textbf{?}}}

\begin{document}

\title{Hallucination-Resistant Security Planning\\
with a Large Language Model
}

\author{\IEEEauthorblockN{1\textsuperscript{st} Kim Hammar}
\IEEEauthorblockA{
\textit{The University of Melbourne}\\
Melbourne, Australia \\
kim.hammar@unimelb.edu.au}
\and
\IEEEauthorblockN{2\textsuperscript{nd} Tansu Alpcan}
\IEEEauthorblockA{
\textit{The University of Melbourne}\\
Melbourne, Australia \\
tansu.alpcan@unimelb.edu.au}
\and
\IEEEauthorblockN{3\textsuperscript{rd} Emil C. Lupu}
\IEEEauthorblockA{
\textit{Imperial College London}\\
London, United Kingdom \\
e.c.lupu@imperial.ac.uk}
}

\maketitle
\begin{abstract}
Large language models (LLMs) are promising tools for supporting security management tasks, such as incident response planning. However, their unreliability and tendency to hallucinate remain significant challenges. In this paper, we address these challenges by introducing a principled framework for using an LLM as decision support in security management. Our framework integrates the LLM in an iterative loop where it generates candidate actions that are checked for consistency with system constraints and lookahead predictions. When consistency is low, we abstain from the generated actions and instead collect external feedback, e.g., by evaluating actions in a digital twin. This feedback is then used to refine the candidate actions through in-context learning (ICL). We prove that this design allows to control the hallucination risk by tuning the consistency threshold. Moreover, we establish a bound on the regret of ICL under certain assumptions. To evaluate our framework, we apply it to an incident response use case where the goal is to generate a response and recovery plan based on system logs. Experiments on four public datasets show that our framework reduces recovery times by up to $30$\% compared to frontier LLMs.
\end{abstract}

\begin{IEEEkeywords}
Security management, LLM, incident response.
\end{IEEEkeywords}

\IEEEpeerreviewmaketitle

\section{Introduction}
\IEEEPARstart{M}{anaging} the security of networked systems alongside their service requirements and physical infrastructures is an arduous technical challenge that continues to grow. Examples of management tasks include incident response, risk analysis, policy design, and threat hunting. Today, many of these tasks remain manual processes carried out by security experts. While this approach can be effective, it is labor-intensive and requires significant skills. For example, a recent study reports a global shortage of more than $4$ million security experts \cite{ISC2_2024_Workforce_Study}. 

\begin{figure}
  \centering 
  \scalebox{0.79}{
    \includegraphics{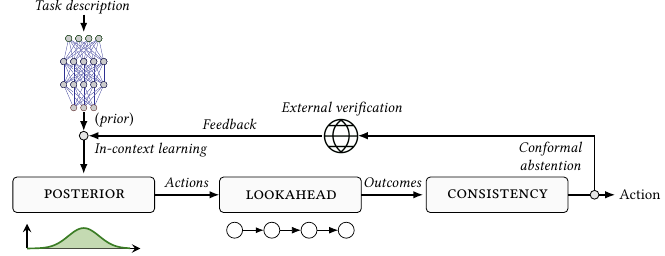}
  }
\caption{Our framework for hallucination-resistant security planning with a large language model (LLM). We integrate the LLM in an iterative verification and refinement loop, in which the LLM is used to generate candidate actions that are checked for consistency with lookahead predictions. When consistency is low, we abstain from the generated actions and collect external feedback, which allows to refine the candidate actions through in-context learning (ICL).}
  \label{fig:method}
\end{figure}

To address this shortage, an emerging direction of research is to leverage the security knowledge encoded in large language models (LLMs) as decision support in security management. Notably, IBM recently launched an LLM-based incident response service \cite{hussey2025instana} and Google has developed an LLM-driven security operations platform \cite{google_ai_sec}. Other applications of LLMs that are explored in the research literature include penetration testing \cite{pentest_gpt,rodriguez2025frameworkevaluatingemergingcyberattack}, security assistants \cite{DBLP:conf/ndss/DengLCBWLW025}, scanning \cite{DBLP:conf/ndss/StafeevRSKP25,299549}, threat hunting \cite{google_llm_recovery}, verification \cite{DBLP:conf/ndss/0012XW00S025}, piracy \cite{DBLP:conf/ndss/GohilDNSR25}, detection \cite{DBLP:conf/ndss/YangL0L25}, fuzzing \cite{10.1145/3597503.3639121,299896}, API design \cite{DBLP:conf/ndss/LiuY0L25}, network operations \cite{net_llm,10575237}, intent-based networking \cite{11073595,11073741}, deployment planning \cite{11073607}, threat intelligence \cite{ARAZZI2025100765}, and decompilation \cite{DBLP:conf/ndss/HuL024}.

Most of the methods proposed so far are based on prompt engineering of frontier LLMs, such as \textsc{gpt-5} \cite{openai2024gpt4technicalreport}. While this approach can be effective, it relies on extensive tuning of prompts and does not provide theoretical guarantees. Moreover, prompt engineering offers no way of mitigating \textit{hallucinations}, i.e., cases where the LLM generates outputs that appear plausible but are incorrect. This limitation is especially critical in security management, where incorrect outputs can lead to costly service disruptions and misconfigurations.

In this paper, we address these limitations by presenting a principled framework for using an LLM as decision support in security management; see Fig. \ref{fig:method}. Our framework embeds the LLM in an iterative loop where it generates candidate actions that are checked for consistency with system constraints and lookahead predictions. When consistency is low, we abstain from the generated actions and instead collect external feedback, e.g., by evaluating actions in a digital twin. This feedback is then incorporated into the context of the LLM and used to generate a new set of candidate actions. In other words, the candidate actions are refined through in-context learning (ICL). We prove that this design allows our framework to achieve a desirable hallucination risk by tuning the consistency threshold. Moreover, we establish an upper bound on the regret of ICL under certain assumptions. 

We evaluate our framework experimentally by applying it to an incident response use case where the goal is to generate a response and recovery plan based on system logs and security alerts. Experiments on four public datasets show that our framework reduces hallucination and shortens recovery times by up to $30$\% compared to frontier LLMs (e.g., \textsc{openai o3}). We also present an ablation study assessing the contribution of the individual steps of our framework. This study shows that each step has a positive impact on its performance.

In summary, the main contributions of this paper are:
\begin{itemize}
\item We develop a novel framework for using an LLM as decision support in security management. Our framework combines lookahead, consistency, and ICL. We provide a video demonstration of our framework at \cite{llm_source_kim}.
\item We derive a bound on the Bayesian regret of ICL and prove that our framework can achieve a desirable hallucination risk by tuning the consistency threshold.
\item We evaluate our framework experimentally on an incident response use case. Our evaluation is based on incidents from four different datasets published in the literature. The results show that our framework reduces hallucination and shortens recovery times by up to $30$\% compared to frontier LLMs, such as \textsc{gemini 2.5 pro} and \textsc{openai o3}. Our testbed is open-source and available at \cite{llm_source_kim}.
\end{itemize}

\section{Related Work}\label{sec:related_work}
A growing body of work explores the use of LLMs in security management; see e.g., \cite{pentest_gpt,rodriguez2025frameworkevaluatingemergingcyberattack,DBLP:conf/ndss/DengLCBWLW025,DBLP:conf/ndss/StafeevRSKP25,299549,hammar2025incidentresponseplanningusing}. For example, Deng et al. present a framework for penetration testing using LLMs \cite{pentest_gpt}. Similarly, Stafeev et al. introduce a framework for security scanning with LLMs \cite{DBLP:conf/ndss/StafeevRSKP25}. These studies demonstrate promising applications but are primarily concerned with systems aspects such as architecture design and information retrieval. To our knowledge, no prior work \textit{in security management} has presented an LLM-based framework that provides theoretical guarantees. In particular, current approaches are mainly based on prompt engineering, which is unreliable and does not mitigate hallucinations; see Table \ref{tab:related_work}.

By contrast, reliable decision-making with LLMs has emerged as a major research topic in artificial intelligence; see e.g., \cite{yadkori2024mitigatingllmhallucinationsconformal,tayebati2025learningconformalabstentionpolicies,DBLP:conf/iclr/0002WSLCNCZ23,chen2023universalselfconsistencylargelanguage,elazar-etal-2021-measuring}. Notable contributions within this literature are the concepts of self-consistency \cite{DBLP:conf/iclr/0002WSLCNCZ23}, conformal abstention \cite{yadkori2024mitigatingllmhallucinationsconformal,tayebati2025learningconformalabstentionpolicies}, and ICL \cite{moeini2025surveyincontextreinforcementlearning}, all of which can be used to improve the reliability of LLM-based decision-making. While our framework builds on these concepts, we introduce a new way to incorporate feedback in the decision-making process. Moreover, we are the first to apply these concepts in the context of security management.

 \begin{table}[H]
  \centering
  \scalebox{0.82}{
    \begin{tabular}{llll} \toprule
\rowcolor{lightgray}
      {\textit{Approach}} & {\textit{General framework}} & {\textit{Hallucination-resistant}} & {\textit{Theory}} \\ \midrule
    \rowcolor{lightgreen}
      Our framework  & \cmark & \cmark & \cmark\\
      \cite{hussey2025instana,google_ai_sec,google_llm_recovery}  & \qmark & \qmark & \xmark\\
      \cite{pentest_gpt,10.1145/3597503.3639121}  & \cmark & \xmark & \xmark\\
      \cite{net_llm,11073595,castro2025largelanguagemodelsautonomous,yan2024dependingshouldmentoringllm}  & \cmark & \xmark & \xmark\\
      \cite{rodriguez2025frameworkevaluatingemergingcyberattack,DBLP:conf/ndss/DengLCBWLW025,DBLP:conf/ndss/StafeevRSKP25,DBLP:conf/ndss/HuL024,10.1145/3719027.3744855}  & \xmark & \xmark & \xmark\\
      \cite{299549,DBLP:conf/ndss/0012XW00S025,DBLP:conf/ndss/GohilDNSR25,DBLP:conf/ndss/YangL0L25,299740} & \xmark & \xmark & \xmark\\
      \cite{299896,DBLP:conf/ndss/LiuY0L25,10575237,10.1145/3719027.3765219} & \xmark & \xmark & \xmark\\
      \cite{11073741,11073607,10991969,10.1145/3719027.3744872} & \xmark & \xmark & \xmark\\
    \bottomrule\\
  \end{tabular}}
\caption{Comparison between this paper (green row) and related work that uses LLMs to support security management tasks.}\label{tab:related_work}
\end{table}

\section{Problem Statement}
We consider the problem of completing an open-ended security management task that requires executing a sequence of actions $\mathbf{a}_0,\mathbf{a}_1,\hdots,\mathbf{a}_{T-1}$. We place no restriction on the structure of an action apart from assuming that it can be expressed in textual form. For example, an action can be a system command, a configuration change, or an instruction to a security operator. The goal when selecting these actions is to complete the task as quickly as possible, i.e., to minimize the \textit{task completion time} $T$. No mathematical model or simulator of the task is assumed, which means that it cannot be addressed with traditional planning or optimization methods.
\subsection*{Motivating Example: Incident Response}\label{sec:use_case}
An example of a security management task that fits the general description above is \textit{incident response}, which involves selecting a sequence of actions that restores a networked system to a secure and operational state after a cyberattack. When selecting these actions, a key challenge is that the information about the attack is often limited to partial indicators of compromise, such as logs and alerts. Another major challenge is that the actions have to be tailored to the context of the incident. Example actions include redirecting network flows, patching vulnerabilities, and updating access control policies.

Figure~\ref{fig:resilience} illustrates the phases of incident response. Following the attack are detection and response time intervals, which represent the time to detect the attack and form a response, respectively. These phases are followed by a \textit{recovery time} interval $T$, during which response actions are deployed. The objective is to keep this interval as short as possible to limit the cost of the incident. For example, in the event of a ransomware attack, a delay of a few minutes in containing the attack may allow the malware to encrypt systems or spread laterally \cite{wannacry_nhgs}.

\begin{figure}[H]
  \centering 
  \scalebox{0.88}{
    \includegraphics{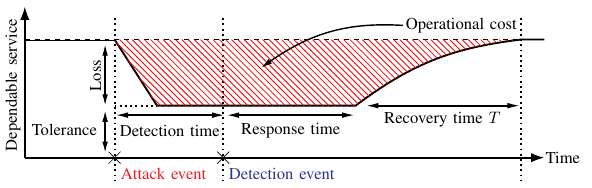}
  }
  \caption{Phases and performance metrics of the incident response use case.}
  \label{fig:resilience}
\end{figure}

\begin{figure*}
  \centering
  \scalebox{1.45}{
    \includegraphics{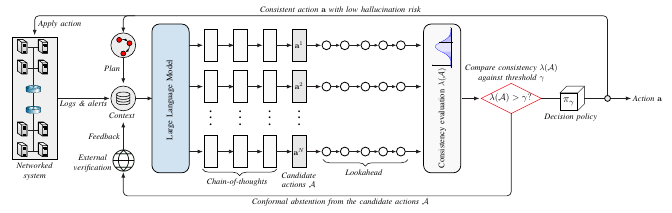}
  }
\caption{Our framework for hallucination-resistant security planning with a large language model (LLM).
 Our framework integrates the LLM inside an iterative loop of verification and refinement. In this loop, the LLM is prompted with details about a management task and generates candidate actions, which are then evaluated for consistency against lookahead predictions. If the consistency is low, the framework abstains from selecting an action and instead gathers external feedback, e.g., by testing the actions in a digital twin or asking a security expert. This feedback is subsequently leveraged to improve the candidate actions through in-context learning (ICL). The iterative verification and refinement procedure continues until an action that meets the consistency criterion is found.}
  \label{fig:deployment}
\end{figure*}
\section{Our Framework for Hallucination-Resistant \\Security Planning with an LLM}
In this section, we present our solution framework for addressing the problem stated above. In particular, we present a practical framework for using an LLM as decision support in security management while providing theoretical guarantees.

\subsection{Framework Overview}
Figure~\ref{fig:deployment} provides an architectural overview of our framework. As shown in the figure, our framework integrates the LLM within an iterative loop where it is used to generate candidate actions that are checked for consistency with system constraints and lookahead predictions. When consistency is low, we abstain from the generated actions and instead collect external feedback, e.g., by evaluating actions in a digital twin or asking a security expert. This feedback is then used to refine the candidate actions through in-context learning (ICL). In the following subsections, we describe each of these steps in detail, starting with the generation of candidate actions.
\subsection{Using the LLM to Generate Candidate Actions}\label{sec:candidates}
At each stage $t$ of the task, we prompt the LLM with information about the task and previously applied actions. We then instruct the LLM to generate $N$ \textit{candidate actions} $\mathcal{A}_t=\{\mathbf{a}_t^1,\hdots,\mathbf{a}_t^N\}$ to apply next. For example, an action can be generated via \textit{chain-of-thought prompting} \cite{10.5555/3600270.3602070}. In this case, the LLM is instructed to analyze the task via step-by-step reasoning before generating the next action. We provide an example of a chain-of-thought instruction in Appendix \ref{app:prompt}.

Given such an instruction, the LLM generates an action by predicting the most likely sequence of tokens that follow the prompt. While this approach is flexible and can adapt to the prompt, it offers no guarantees. Consequently, the LLM may generate actions that seem plausible yet are incorrect. This failure mode is commonly referred to as \textit{hallucination} \cite{kalai2025languagemodelshallucinate}.

In the context of security planning, we consider an action to be hallucinated if it does not contribute to the goal of completing the task, as expressed by the following definition.
\begin{definition}[Hallucinated action]\label{def:hallucination}
Given a sequence of actions $\mathbf{a}_0,\hdots,\mathbf{a}_{t-1},\mathbf{a}_t$ for completing a security management task. The action $\mathbf{a}_t$ is hallucinated if it does not reduce the expected time remaining to complete the task, i.e.,
\begin{align}
\mathbb{E}\{T_{t+1} \mid \mathbf{a}_0,\hdots,\mathbf{a}_t\} \geq \mathbb{E}\{T_t \mid \mathbf{a}_0,\hdots,\mathbf{a}_{t-1}\},
\end{align}
where $T_t$ is the time remaining to complete the task at stage $t$, assuming that future actions are selected optimally.
\end{definition}
To illustrate the preceding definition, consider the incident response use case described in \S\ref{sec:use_case}. Suppose that the LLM generates an action that aims to patch a vulnerability that does not exist. This action does not make any progress toward recovering from the incident and is therefore a hallucination in our definition. Similarly, the LLM could generate an action that involves shutting down a system component that is unrelated to the incident. This action is counterproductive to recovery as it creates unnecessary disruptions that need to be resolved, which increases the time to recover from the incident. Consequently, it is a hallucination according to Def.~\ref{def:hallucination}.

Given the possibility of such hallucinations, we require a reliable way to evaluate and refine the candidate actions to reduce the hallucination risk. In our framework, we address this problem by evaluating the \textit{consistency} of the candidate actions, as detailed in the following subsection.

\subsection{Evaluating the Consistency of Candidate Actions}\label{sec:assessment}
Detecting hallucinations is challenging because it requires knowledge of the effects of actions, which are typically unknown. To address this challenge, we instead assess the \textit{consistency} of actions generated by the LLM and use inconsistency as an indication of hallucination. From a logical perspective, the LLM is \textit{inconsistent} if it contradicts itself, i.e., if it produces inconsistent outputs for the same input.

Building on the logical foundation, we measure the LLM's consistency by checking for contradictions among the candidate actions in the set $\mathcal{A}_t$. Intuitively, if the candidate actions contradict each other, it indicates a higher risk of hallucination since the LLM is producing conflicting actions when queried multiple times with the same instruction. This intuition is supported by numerous empirical studies that found consistency to reduce hallucination; see e.g., \cite{weng-etal-2023-large,DBLP:conf/iclr/0002WSLCNCZ23,chen2023universalselfconsistencylargelanguage}.

To measure conflicts between the candidate actions, we compare their expected outcomes based on \textit{lookahead predictions}. Specifically, we use the LLM to predict the effect of each candidate action $\mathbf{a}^i_t$ in terms of the expected time remaining to complete the task after executing the action, which we denote by $T^i_{t+1}$. We then measure the consistency of the set $\mathcal{A}_t$ based on the dispersion of these predictions. Specifically, we quantify the consistency of the set $\mathcal{A}_t$ via the consistency function
\begin{align}
\lambda(\mathcal{A}_t) = \exp\left(\frac{-\beta}{N}\sum_{i=1}^N\left(T_{t+1}^i - \overline{T}_{t+1}\right)^2\right),\label{eq:self_consistency}
\end{align}
where $\beta > 0$ is a constant that determines how quickly the consistency decays as the disagreement among the predictions increases and $\overline{T}_{t+1}$ is the average value, i.e.,
$$\overline{T}_{t+1}=\frac{1}{N}\sum_{i=1}^NT^i_{t+1}.$$
The exponential in \eqref{eq:self_consistency} ensures that the consistency is bounded between $0$ (inconsistent) and $1$ (consistent). We illustrate this consistency function through two examples in Appendix~\ref{app:consistency}.
\begin{remark}
The consistency function $\lambda$ does not rely on task-specific consistency constraints. If such constraints are available, they can be incorporated in \eqref{eq:self_consistency}. For example, in the context of incident response [cf.~\S \ref{sec:use_case}], response actions may be required to preserve service availability for clients.
\end{remark}

In our framework, we use the consistency function $\lambda$ [cf.~\eqref{eq:self_consistency}] to abstain from actions with low consistency. If abstention does not occur, we instead select the action that leads to the shortest (predicted) time to complete the task, as expressed by the following policy:
\begin{align}
  \pi_{\gamma}(\mathcal{A}_t) &=
                                     \begin{dcases}
                                       \emptyset \text{ (abstain)}, & \text{if } \lambda(\mathcal{A}_t) \leq \gamma,\\
                                       \argmin_{\mathbf{a}^i_t \in \mathcal{A}_t}\{T_{t+1}^i\}, &  \text{if } \lambda(\mathcal{A}_t) > \gamma,
                                     \end{dcases}  \label{eq:threshold_rule}
\end{align}
where ties in the $\argmin$ are broken randomly and $\gamma \in [0,1]$ is a \textit{consistency threshold} for controlling the hallucination risk.
\subsection{In-Context Learning from Feedback}\label{sec:collecting_feedback}
When the policy $\pi_{\gamma}$ [cf.~\eqref{eq:threshold_rule}] abstains, it indicates that the candidate actions do not meet the required level of consistency. To improve the consistency, we collect external feedback by evaluating the action that would have been chosen if we did not abstain, e.g., by executing it in a digital twin and observing its effect; see Fig.~\ref{fig:digital_twin}. We then incorporate the feedback into the LLM’s context and generate a new set of candidate actions. In other words, the candidate actions are refined through in-context learning (ICL). This approach allows the LLM to learn about effective actions without changing its underlying parameters; see e.g., \cite{dong-etal-2024-survey} and \cite{10.5555/3495724.3495883} for details about ICL.

The iterative procedure of generating actions and collecting feedback continues until the actions meet the consistency criterion in \eqref{eq:threshold_rule}. Since the collection of feedback takes time and resources, the consistency threshold $\gamma$ controls a trade-off between the hallucination risk and the cost of feedback collection. We analyze this trade-off in the next section.

\begin{figure}
  \centering
  \scalebox{1.05}{
    \includegraphics{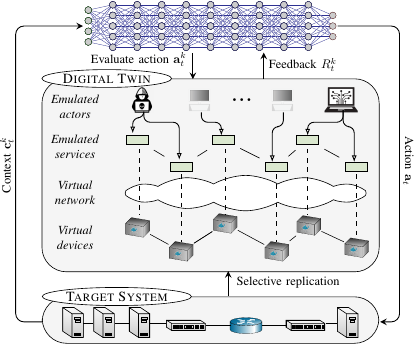}
 }
\caption{Architecture for collecting feedback by evaluating the effects of actions using a digital twin, i.e., a virtual replica of the target system \cite{10154288,hammar_stadler_tnsm}.}
\label{fig:digital_twin}
\end{figure}

\section{Theoretical Analysis of Our Framework}\label{sec:theory}
In this section, we present a theoretical analysis of our framework. Specifically, we present two main results: (\textit{i}) the consistency threshold $\gamma$ can be tuned to achieve a desirable upper bound on the hallucination probability; and (\textit{ii}) the regret of ICL is upper bounded under certain assumptions.
\subsection{Controlling the Hallucination Probability}
Suppose that we have a \textit{calibration dataset} $\{\mathcal{A}^i\}^n_{i=1}$ of sets of candidate actions such that $\pi_{0}(\mathcal{A}^{i})$ is a hallucinated action for each $\mathcal{A}^{i}$. We can then tune $\gamma$ to achieve a desirable upper bound on the hallucination probability, as stated below.
\begin{proposition}\label{prop:abstention_bound}
Assume the sets $\{\mathcal{A}^i\}^n_{i=1}$ are independent and identically distributed (i.i.d.). Let $\tilde{\mathcal{A}}$ be a test example from the same distribution and let $\kappa \in (0,1]$ be a desirable upper bound on the hallucination probability.

Define the threshold
\begin{align}
\tilde{\gamma} = \inf\left\{\gamma \text{ }\middle|\text{ } \frac{|\{i \mid \lambda(\mathcal{A}^i) \leq \gamma \}|}{n} \geq \frac{\lceil (n+1)(1-\kappa)\rceil}{n}\right\}, \label{eq:abs_thresh}
\end{align}
where $\lceil \cdot \rceil$ is the ceiling function. We have
\begin{align*}
P\left(\pi_{\tilde{\gamma}}(\tilde{\mathcal{A}})\neq \emptyset \right) \leq \kappa.
\end{align*}
\end{proposition}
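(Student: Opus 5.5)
This is a split-conformal argument applied to the scalar scores $\lambda(\mathcal{A}^i)$, with the test score $\lambda(\tilde{\mathcal{A}})$ treated as exchangeable with the calibration scores. The plan has three steps: reduce the event to a comparison of scores, identify the threshold as an order statistic, and then invoke the standard quantile lemma.

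\textbf{Step 1: reduce the event.} By \eqref{eq:threshold_rule}, $\pi_{\tilde{\gamma}}(\tilde{\mathcal{A}})\neq\emptyset$ holds exactly when $\lambda(\tilde{\mathcal{A}})>\tilde{\gamma}$. It therefore suffices to show
$$P\bigl(\lambda(\tilde{\mathcal{A}})\leq\tilde{\gamma}\bigr)\geq 1-\kappa.$$
The sets $\mathcal{A}^1,\hdots,\mathcal{A}^n,\tilde{\mathcal{A}}$ are i.i.d. Hence the scores $\Lambda_i=\lambda(\mathcal{A}^i)$ for $i\le n$ and $\Lambda_{n+1}=\lambda(\tilde{\mathcal{A}})$ are i.i.d. real random variables, and in particular exchangeable.

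\textbf{Step 2: identify $\tilde{\gamma}$ as an order statistic.} Let $k=\lceil (n+1)(1-\kappa)\rceil$. The empirical fraction $|\{i:\Lambda_i\le\gamma\}|/n$ is a right-continuous step function of $\gamma$. So the infimum in \eqref{eq:abs_thresh} is attained and equals the $k$-th smallest calibration score $\Lambda_{(k)}$.

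We must handle the case $k>n$, which occurs when $\kappa<1/(n+1)$. Then the set in \eqref{eq:abs_thresh} is empty, $\tilde{\gamma}=+\infty$, and the policy always abstains, so the bound holds trivially. Since $\lambda\in[0,1]$, any $\gamma\ge 1$ plays the same role. Assume $k\le n$ from here on.

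\textbf{Step 3: the quantile lemma.} We need the standard fact that
$$P\bigl(\Lambda_{n+1}\le \Lambda_{(k)}\bigr)\ge k/(n+1)\ge 1-\kappa.$$
By exchangeability, the rank of $\Lambda_{n+1}$ among all $n+1$ scores is uniform on $\{1,\hdots,n+1\}$ when there are no ties. With ties broken uniformly at random, the rank is stochastically no larger than uniform, since ties only help the event $\Lambda_{n+1}\le\Lambda_{(k)}$.

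On the event that $\Lambda_{n+1}$ has rank at most $k$ in the full sample, at most $k-1$ calibration scores are strictly smaller than $\Lambda_{n+1}$. Hence $\Lambda_{n+1}\le\Lambda_{(k)}$. This gives probability at least $k/(n+1)\ge 1-\kappa$, and therefore $P(\lambda(\tilde{\mathcal{A}})>\tilde{\gamma})\le\kappa$.

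The main obstacle is the tie handling in Step 3. Scores can coincide, for example $\lambda=1$ whenever all predictions agree, so the rank argument must be carried out carefully. I would verify the inclusion $\{\mathrm{rank}(\Lambda_{n+1})\le k\}\subseteq\{\Lambda_{n+1}\le\Lambda_{(k)}\}$ directly, rather than assuming continuity of the score distribution.

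A further point, which I would state as a remark, concerns interpretation. The statement bounds the probability of \emph{not abstaining} on a draw from the calibration distribution, namely sets whose $\pi_0$-action is hallucinated. This is the sense in which it bounds the hallucination probability.
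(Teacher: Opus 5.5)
Your proposal is correct and follows essentially the same route as the paper: dispose of the case $\kappa<1/(n+1)$, where $\tilde{\gamma}=\infty$ forces abstention; identify $\tilde{\gamma}$ as the $\lceil (n+1)(1-\kappa)\rceil$-th smallest calibration score; and conclude with the exchangeability rank argument. The one difference is in your favor: the paper asserts $P(\tilde{\lambda}\leq\lambda_k)=k/(n+1)$ as an equality, which fails when scores can tie (e.g., $\lambda=1$ with positive probability), whereas your inequality $P(\Lambda_{n+1}\leq\Lambda_{(k)})\geq k/(n+1)$, justified by the rank inclusion you describe, is exactly what is needed and holds in general.
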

\begin{proof}
We start by noting that if $1-\kappa > 1-\frac{1}{n+1}$, then
\begin{equation}\label{eq:split_1}
\begin{split} 
  \lceil (n+1)(1-\kappa) \rceil &> \left\lceil (n+1)\left(1-\frac{1}{n+1}\right)\right\rceil\\
                            &= \lceil n \rceil = n,
\end{split}                              
\end{equation}
which implies that $\tilde{\gamma}=\infty$; cf. \eqref{eq:abs_thresh}. In this case, the rule \eqref{eq:threshold_rule} will always abstain and thus the proposition holds.

Now consider the case when $(1-\kappa) \in [0, 1-\frac{1}{n+1}]$. Let $\lambda_1,\lambda_2,\hdots,\lambda_{n}$ be an ordered sequence of the consistency values in the set $\{\lambda(\mathcal{A}^i)\}^n_{i=1}$ where ties are broken uniformly at random. Moreover, let $\tilde{\lambda}=\lambda(\tilde{\mathcal{A}})$. Since the values $\lambda_1,\lambda_2,\hdots,\lambda_n,\tilde{\lambda}$ are i.i.d., we have
\begin{align*}
P(\tilde{\lambda} \leq \lambda_k) = \frac{k}{n+1}, && \text{for all }k=1,\hdots,n.
\end{align*}
Because $(1-\kappa) \in [0, 1-\frac{1}{n+1}]$, it follows from \eqref{eq:split_1} that
$$\lceil (n+1)(1-\kappa) \rceil \leq n.$$
This means that $\tilde{\gamma} = \lambda_{\lceil (n+1)(1-\kappa)\rceil}$. As a consequence, the event $\{\pi_{\tilde{\gamma}}(\tilde{\mathcal{A}})\neq \emptyset\}$ is equivalent to the event
$$\{\tilde{\lambda} > \lambda_{\lceil (n+1)(1-\kappa)\rceil}\}.$$
Therefore, we have
\begin{align*}
P\left(\pi_{\tilde{\gamma}}(\tilde{\mathcal{A}})\neq \emptyset\right) &= P\big(\tilde{\lambda} > \lambda_{\lceil (n+1)(1-\kappa)\rceil}\big)\\
                                                               &= 1-P\big(\tilde{\lambda} \leq \lambda_{\lceil (n+1)(1-\kappa)\rceil}\big)\\  
                                                               &= 1-\frac{\lceil (n+1)(1-\kappa) \rceil}{n+1}\\
                                                               &\leq 1-\frac{(n+1)(1-\kappa)}{n+1}\\
                                                                      &= 1-(1-\kappa)\\
                                                                      &=\kappa.
\end{align*}
\end{proof}
Proposition~\ref{prop:abstention_bound} implies that the threshold~$\gamma$ balances the trade-off between abstention frequency and hallucination risk. By tuning this threshold, our framework can be tailored to the reliability requirements of different management tasks. For example, a low hallucination risk may be required in tasks such as automated incident response, whereas a higher risk can be acceptable in advisory settings, such as using the LLM as decision support in threat-hunting or vulnerability analysis.
\begin{remark}
Proposition~\ref{prop:abstention_bound} does not rely on the implementation of the consistency function $\lambda$. Hence, our framework can be instantiated with different consistency functions (e.g., functions for specific tasks) while retaining its theoretical guarantees.
\end{remark}
\subsection{Convergence of In-Context Learning}
While the preceding analysis shows that our framework can control the hallucination probability by abstaining from inconsistent actions, it does not guarantee that our framework will eventually output a consistent action as more feedback is collected. In this subsection, we establish conditions under which such convergence is guaranteed in our framework.

From a theoretical perspective, ICL can be interpreted as performing approximate Bayesian learning; see e.g., \cite{10.5555/3692070.3692580,10.5555/3692070.3693331,xie2022explanationincontextlearningimplicit}. To formalize this perspective, let $\mathbf{c}_t^k$ be the available information during iteration $k$ of ICL at stage $t$ of the task. Moreover, let $p_{\bm{\theta}}$ be the LLM-distribution from which the action in \eqref{eq:threshold_rule} is sampled. Following the Bayesian perspective, we assume that the distribution $p_{\bm{\theta}}(\cdot \mid \mathbf{c}^k_t)$ is aligned with the posterior $P(\cdot \mid \mathbf{c}_t^k)$, as formally expressed below.
\begin{assumption}[Bayesian learning]\label{assumption:2}
For the purpose of theoretical analysis, we assume that the distribution of actions in \eqref{eq:threshold_rule} when not abstaining is aligned with the posterior distribution of the optimal action, i.e.,
\begin{align*}
p_{\bm{\theta}}(\mathbf{a}_t \mid \mathbf{c}_t^k) &= P(\mathbf{a}_t \mid \mathbf{c}_t^k),
\end{align*}
for all information vectors $\mathbf{c}_t^k$ and actions $\mathbf{a}_t$.
\end{assumption}
We place no restriction on the form of feedback in our framework. It can range from database lookups to evaluations in a digital twin or demonstrations by a security expert. However, for the purpose of theoretical analysis, we assume that the feedback has a specific form, as expressed below.
\begin{assumption}[Bandit feedback]\label{assumption_reward}
Let $\mathbf{a}^k_t$ be the action that would have been selected at iteration $k$ of ICL if abstention did not occur. For the purpose of theoretical analysis, we assume that the feedback on the quality of action $\mathbf{a}_t^k$ is a real-valued reward $R_t^k$ that is an unbiased sample of the action's impact on the expected time to complete the task, i.e.,
\begin{align*}
\mathbb{E}\{R_t^k\} &= \mathbb{E}\{T_t \mid \mathbf{a}_0,\hdots,\mathbf{a}_{t-1}\} - \mathbb{E}\{T_{t+1} \mid \mathbf{a}_0,\hdots,\mathbf{a}_{t-1},\mathbf{a}_t^k\},
\end{align*}
where $T_t$ is the time remaining to complete the task at stage $t$, assuming that future actions are selected optimally.
\end{assumption}
To quantify the efficiency of ICL in discovering an effective action for completing the task, we use the Bayesian regret metric. Specifically, let $\mathcal{A}$ be the (finite) set of feasible actions and let $R_t^{\star}$ be the reward of an optimal action at stage $t$. The Bayesian regret after $K$ iterations of ICL is then given by
\begin{align}\label{eq:bayes_regret}
\mathcal{R}_t^K &= \mathbb{E}\left\{\sum_{k=0}^{K-1}\left(R_t^{\star} - R_t^k\right)\right\}.
\end{align}    
Given this definition of regret, we have the following result.
\begin{proposition}\label{prop:regret_bound}
Under Assumptions \ref{assumption:2}-\ref{assumption_reward}, we have
\begin{align*}
\mathcal{R}_t^K \leq C\sqrt[]{|\mathcal{A}|K\ln K}, &&\text{for each stage }t,
\end{align*}
where $\mathcal{R}_t^K$ is the Bayesian regret [cf.~\eqref{eq:bayes_regret}], $C > 0$ is a universal constant, $\mathcal{A}$ is the (finite) set of feasible actions for the management task, and $K$ is the number of ICL iterations.
\end{proposition}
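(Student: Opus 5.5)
Under Assumption~\ref{assumption:2}, the action that ICL would select at iteration $k$ is drawn from the posterior distribution of the optimal action given $\mathbf{c}_t^k$. This is exactly \emph{posterior (Thompson) sampling} for a $|\mathcal{A}|$-armed bandit. The plan is therefore to reduce Proposition~\ref{prop:regret_bound} to the known Bayesian regret bound for Thompson sampling (Russo and Van Roy) and check that its hypotheses are met.

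\textbf{Step 1: the bandit.} Fix a stage $t$. The arms are $\mathcal{A}$, and the mean reward of arm $\mathbf{a}$ is
\begin{align*}
\mu(\mathbf{a})=\mathbb{E}\{T_t\mid \mathbf{a}_0,\hdots,\mathbf{a}_{t-1}\}-\mathbb{E}\{T_{t+1}\mid \mathbf{a}_0,\hdots,\mathbf{a}_{t-1},\mathbf{a}\}.
\end{align*}
This mean is unknown and treated as random under a prior. By Assumption~\ref{assumption_reward}, the observation $R_t^k$ is an unbiased sample of $\mu(\mathbf{a}_t^k)$. The context $\mathbf{c}_t^k$ contains the history $(\mathbf{a}_t^0,R_t^0,\hdots,\mathbf{a}_t^{k-1},R_t^{k-1})$ together with the fixed task information. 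Assumption~\ref{assumption:2} then gives
\begin{align*}
P(\mathbf{a}_t^k=\mathbf{a}\mid\mathbf{c}_t^k)=P(\mathbf{a}^\star=\mathbf{a}\mid\mathbf{c}_t^k).
\end{align*}
Abstention only triggers the next iteration and does not change the selection rule, so it can be ignored. The regret \eqref{eq:bayes_regret} coincides with the Bayesian regret of Thompson sampling, since $\mathbb{E}\{R_t^\star-R_t^k\}=\mathbb{E}\{\mu(\mathbf{a}^\star)-\mu(\mathbf{a}_t^k)\}$ by the tower property.

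\textbf{Step 2: UCB decomposition.} Because $\mathbf{a}_t^k$ and $\mathbf{a}^\star$ have the same conditional law given $\mathbf{c}_t^k$, we have $\mathbb{E}\{U_k(\mathbf{a}^\star)\}=\mathbb{E}\{U_k(\mathbf{a}_t^k)\}$ for any $\mathbf{c}_t^k$-measurable function $U_k$. This yields
\begin{align*}
\mathcal{R}_t^K=\mathbb{E}\Big\{\sum_{k}\big(\mu(\mathbf{a}^\star)-U_k(\mathbf{a}^\star)\big)\Big\}+\mathbb{E}\Big\{\sum_k\big(U_k(\mathbf{a}_t^k)-\mu(\mathbf{a}_t^k)\big)\Big\}.
\end{align*}
Take $U_k(\mathbf{a})=\hat\mu_k(\mathbf{a})+B\sqrt{2\ln K/ n_k(\mathbf{a})}$, where $\hat\mu_k$ is the empirical mean and $n_k$ the pull count.

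\textbf{Step 3: bounding the two sums.} For the first sum, Hoeffding's inequality with a union bound over arms and counts makes $\mu>U_k$ occur with probability $O(1/K)$ per step. This contributes $O(|\mathcal{A}|)$ in total. For the second sum, the widths are summed by a pigeonhole argument,
\begin{align*}
\sum_k \frac{1}{\sqrt{n_k(\mathbf{a}_t^k)}}\le \sum_{\mathbf{a}}2\sqrt{n_K(\mathbf{a})}\le 2\sqrt{|\mathcal{A}|K}.
\end{align*}
Together with the concentration failure term, this gives $O(\sqrt{|\mathcal{A}|K\ln K})$. Absorbing constants into $C$ completes the proof.

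\textbf{Main obstacle.} The concentration step needs boundedness or sub-Gaussianity of $R_t^k$, and Assumption~\ref{assumption_reward} only posits unbiasedness. I would use the natural fact that task completion times lie in a bounded range, so rewards are bounded, and state this explicitly with the bound absorbed into $C$. I would also verify that $\mathbf{c}_t^k$ carries the full reward history, so that Assumption~\ref{assumption:2} really yields posterior sampling with respect to the bandit's filtration.
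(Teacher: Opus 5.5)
Your proposal takes the paper's route: Assumption~\ref{assumption:2} makes the selected action a posterior sample of the optimal action, which is Thompson sampling, and the paper then cites the standard Bayesian regret bound \cite[Thm 36.1]{bandit_book}, whose proof is the same UCB-decomposition argument you sketch in Steps 2--3. The obstacle you flag is real and is equally implicit in the paper's citation, since that theorem assumes $1$-subgaussian rewards with means in $[0,1]$; note, however, that the reward range must be normalized away rather than absorbed into $C$, otherwise $C$ is no longer a universal constant.
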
  
\begin{table*}
  \centering
  \scalebox{0.9}{
    \begin{tabular}{llll} \toprule
\rowcolor{lightgray}
      {\textit{Dataset}} & {\textit{System}} & {\textit{Attacks}} & {\textit{Logs}}\\ \midrule
      CTU-Malware-2014 \cite{GARCIA2014100} & Windows XP SP2 servers & Various malwares and ransomwares, e.g., CryptoDefense \cite{8418627}. & Snort alerts \cite{snort} \\
      CIC-IDS-2017 \cite{icissp18} & Windows and Linux servers & Denial-of-service, web attacks, heartbleed, SQL injection, etc. & Snort alerts \cite{snort} \\
      AIT-IDS-V2-2022 \cite{ait_ids_1} & Linux and Windows servers/hosts & Multi-stage attack with reconnaissance, cracking, and escalation. & Wazuh alerts \cite{wazuh} \\
      CSLE-IDS-2024 \cite{dsn24_hammar_stadler} & Linux servers & SambaCry, Shellshock, exploit of CVE-2015-1427, etc. & Snort alerts \cite{snort}\\
    \bottomrule\\
  \end{tabular}}
  \caption{The datasets of cyberattacks and logs used for the experimental evaluation; see Appendix~\ref{app:experimental_setup} for preprocessing details.}\label{tab:dataset_types}
\end{table*}  
\begin{proof}
We frame the ICL procedure at each stage $t$ of the task as a multi-armed bandit problem, where the set $\mathcal{A}$ corresponds to the arms of the bandit. At each ICL iteration, policy $\pi_{\gamma}$ [cf.~\eqref{eq:threshold_rule}] selects an action $\mathbf{a}_t^k$ and receives a reward $R_t^k$, as defined in Assumption \ref{assumption_reward}. Assumption~\ref{assumption:2} implies that the action is sampled from the posterior distribution $P(\mathbf{a}_t \mid \mathbf{c}^k_t)$. As a consequence, the action is sampled in accordance with the Thompson sampling algorithm \cite{thompson}. Therefore, the statement of the proposition follows from standard regret bounds for Thompson sampling; see e.g., \cite[Thm 36.1]{bandit_book}.
\end{proof}
The following corollary is immediate.
\begin{corollary}
Under Assumptions \ref{assumption:2}-\ref{assumption_reward} and assuming that the optimal action at each stage $t$ is unique, then the ICL process at each stage $t$ of the task converges to a set of candidate actions $\mathcal{A}_t$ that meets the consistency criterion in \eqref{eq:threshold_rule}.
\end{corollary}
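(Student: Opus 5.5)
We will derive the corollary from Proposition~\ref{prop:regret_bound} in three steps: (i) the ICL distribution concentrates on the optimal action, (ii) the $N$ candidate actions therefore coincide with high probability, and (iii) identical candidates have equal lookahead predictions and hence maximal consistency.

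\textbf{Step 1: concentration on the optimal action.} Fix a stage $t$ and let $\mathbf{a}_t^\star$ denote the unique optimal action. Define the gap
\[
\Delta=\min_{\mathbf{a}\neq\mathbf{a}_t^\star}\bigl(\mathbb{E}\{R_t\mid\mathbf{a}_t^\star\}-\mathbb{E}\{R_t\mid\mathbf{a}\}\bigr).
\]
Since $\mathcal{A}$ is finite and the optimum is unique, $\Delta>0$. Under Assumption~\ref{assumption_reward}, the regret \eqref{eq:bayes_regret} is at least $\Delta$ times the expected number of iterations $k<K$ with $\mathbf{a}_t^k\neq\mathbf{a}_t^\star$. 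Combining this with Proposition~\ref{prop:regret_bound} gives
\[
\frac{1}{K}\sum_{k=0}^{K-1}P(\mathbf{a}_t^k\neq\mathbf{a}_t^\star)\le \frac{C}{\Delta}\sqrt{\frac{|\mathcal{A}|\ln K}{K}}\to 0.
\]
By Assumption~\ref{assumption:2}, $\mathbf{a}_t^k$ is distributed as the posterior $P(\cdot\mid\mathbf{c}_t^k)$. Hence the Ces\`aro average of the posterior mass outside $\mathbf{a}_t^\star$ tends to zero. To pass from this average to a per-iteration statement, we plan to use that posterior probabilities form a bounded martingale, so they converge almost surely. A vanishing Ces\`aro average of the limit then forces $P(\mathbf{a}_t^\star\mid\mathbf{c}_t^k)\to1$ almost surely.

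\textbf{Step 2: the candidates coincide.} We apply Assumption~\ref{assumption:2} to each of the $N$ candidates. These are samples drawn from $p_{\bm\theta}(\cdot\mid\mathbf{c}_t^k)$ given the same context, so they are conditionally i.i.d. The probability that all of them equal $\mathbf{a}_t^\star$ is therefore $P(\mathbf{a}_t^\star\mid\mathbf{c}_t^k)^N$, which tends to $1$.

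\textbf{Step 3: maximal consistency.} When all candidates coincide, the lookahead predictions $T^i_{t+1}$ should be identical. The dispersion in \eqref{eq:self_consistency} is then zero, so $\lambda(\mathcal{A}_t)=1>\gamma$ for every $\gamma<1$. Consequently, the abstention event in \eqref{eq:threshold_rule} has probability tending to zero. This gives convergence in probability, and the procedure terminates almost surely because it stops at the first non-abstaining iteration.

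\textbf{Main obstacle.} The hardest part is the interface between the formal results and the framework's details, in two places. First, Step~1 needs the passage from a regret bound to last-iterate concentration of the posterior; the martingale argument is what we would try first. Second, Step~3 needs the lookahead predictions to be a deterministic, or at least consistent, function of the action. If the predictions are stochastic LLM outputs, we would add the implicit assumption that predictions for identical actions agree. The threshold case $\gamma=1$ must be excluded, or the claim restricted to $\gamma<1$.
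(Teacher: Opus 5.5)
Your argument follows the paper's route. The paper's proof makes three assertions: Proposition~\ref{prop:regret_bound} makes the candidate set eventually contain the optimal action with probability one; this ``implies'' the set contains only the optimal action; and such a set is consistent by definition of $\lambda$. Your Steps 1--3 expand exactly these. Two of your caveats are genuine corrections to the paper. First, for $\gamma=1$ the rule \eqref{eq:threshold_rule} always abstains because $\lambda\le 1$, so the statement needs $\gamma<1$. Second, the last step needs identical candidates to receive identical lookahead predictions, which the paper assumes silently. One small repair in Step 1: in the Bayesian setting $\mathbf{a}_t^\star$ and $\Delta$ are random, so the regret bound gives $\mathbb{E}\{\Delta N_K\}\le C\sqrt{|\mathcal{A}|K\ln K}$, with $N_K$ the number of suboptimal selections. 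You should obtain $\mathbb{E}\{N_K\}/K\to 0$ from $\Delta>0$ almost surely plus bounded convergence, instead of dividing by $\Delta$.

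The step that does not go through as written is your double use of Assumption~\ref{assumption:2}. In Step 1 the posterior draw is the selected action $\mathbf{a}_t^k$, i.e., the $\argmin$ of the predictions over the $N$ candidates. This is the paper's literal reading, and the one its Thompson-sampling proof of Proposition~\ref{prop:regret_bound} relies on. In Step 2 the posterior draws are the $N$ candidates themselves. For $N\ge 2$ both cannot hold unless the predictions carry no information about the actions, because the minimizer of $N$ posterior draws is not itself a posterior draw. Under the literal reading Step 2 is unsupported: a posterior-distributed minimizer says nothing about the other $N-1$ candidates. This is precisely the unjustified jump from ``contains'' to ``only includes'' in the paper's proof.

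The repair has three parts. Adopt the candidate-level reading as an explicit hypothesis. Use Proposition~\ref{prop:regret_bound} only through its statement, since its Thompson-sampling proof no longer applies verbatim. Then redo Step 1 without claiming $\mathbf{a}_t^k$ is a posterior draw. Write $p_k(\mathbf{a})=P(\mathbf{a}\mid\mathbf{c}_t^k)$. If all $N$ candidates equal $\mathbf{a}$, the minimizer is $\mathbf{a}$, and the candidates are drawn independently of the unknown optimal action given the context. Hence $P(\mathbf{a}_t^k\neq\mathbf{a}_t^\star\mid\mathbf{c}_t^k)\ge g(p_k)$ with $g(p)=\sum_{\mathbf{a}}p(\mathbf{a})^N(1-p(\mathbf{a}))$. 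The Ces\`aro bound makes the averages of $\mathbb{E}\{g(p_k)\}$ vanish. With the almost-sure martingale limit $p_\infty$ and bounded convergence, this gives $\mathbb{E}\{g(p_\infty)\}=0$, so $p_\infty$ is a point mass. It sits at $\mathbf{a}_t^\star$ because $p_\infty$ is the conditional law of $\mathbf{a}_t^\star$ given all feedback. Steps 2 and 3 then go through as you wrote them.

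Note also that under this reading and your Step 3 hypotheses, mere termination needs no learning. All $N$ candidates coincide with conditional probability $\sum_{\mathbf{a}}p_k(\mathbf{a})^N\ge|\mathcal{A}|^{1-N}$ in every round, so the number of rounds is dominated by a geometric random variable. What the regret argument adds is that the abstention probability vanishes and that late candidate sets consist of the optimal action.
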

\begin{proof}
The regret bound in Prop. \ref{prop:regret_bound} implies that the LLM will eventually output a set of candidate actions that contains the optimal action with probability $1$. This implies that the set of candidate actions will only include the optimal action. By definition of the function $\lambda$ [cf.~\eqref{eq:self_consistency}], this set is consistent.
\end{proof}
\begin{remark}
The consistency threshold $\gamma$ can be adjusted to control the rate at which the ICL process converges. In practice, if the ICL process does not converge within a few iterations (e.g., 2-10), the process can be interrupted manually.
\end{remark}  
\section{Experimental Evaluation}\label{sec:evaluation}
To complement the analysis above and evaluate our framework experimentally, we apply it to the incident response use case described in \S \ref{sec:use_case}.  All artifacts related to the evaluation (i.e., data, weights, prompts, and code) are available at \cite{llm_source_kim}.

\subsection{Experimental Setup}
We evaluate our framework based on log data from $25$ incidents across $4$ different datasets published in the literature, namely CTU-Malware-2014 \cite{GARCIA2014100}, CIC-IDS-2017 \cite{icissp18}, AIT-IDS-V2-2022 \cite{ait_ids_1}, and CSLE-IDS-2024 \cite{dsn24_hammar_stadler}; see Table~\ref{tab:dataset_types}. 

Each incident is defined by logs (e.g., from an intrusion detection system) and a system description in textual form. Given this information, the task is to generate effective response actions for recovering the system from the incident. The datasets also include ground-truth response actions, which we use to evaluate the accuracy of the generated actions.

The distribution of \textsc{mitre att\&ck tactics} for the incidents in the evaluation datasets is illustrated in Fig.~\ref{fig:eval_dataset}.
\begin{figure}[H]
  \centering
  \scalebox{0.85}{
   \includegraphics{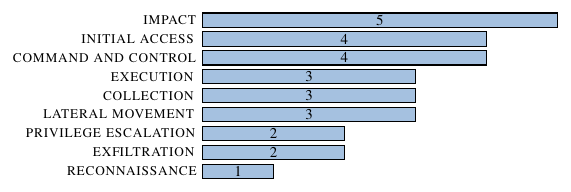}    
  }
  \caption{Number of occurrences of different \textsc{mitre att\&ck tactics} \cite{strom2018mitre} among the incidents in the evaluation datasets; cf.~Table~\ref{tab:dataset_types}.}
  \label{fig:eval_dataset}
\end{figure}
We provide a (condensed) example of an incident from the CTU-Malware-2014 dataset \cite{GARCIA2014100} to the right.

\begin{tcolorbox}[float,colback=lightgray!20, colframe=lightgray!30, sharp corners, boxrule=0pt, title=\small \textcolor{black}{\underline{Example incident from the CTU-Malware-2014 dataset \cite{GARCIA2014100}}.}]
\footnotesize  \textbf{System description (condensed)}: Two subnetworks (A and B) are connected via a switch that is also connected to the Internet. All servers run Windows XP SP2. Their IPs and configurations are...\\
  
\footnotesize \textbf{Snort alert logs (condensed)}:
\vspace{-0.2cm}
\begin{verbatim}
[120:3:2] (http_inspect) NO CONTENT-LENGTH..
[1:31033:6] MALWARE Win.Trojan.Cryptodefence..
{TCP} 147.32.84.165:1057 -> 222.88.205.195:443
[129:5:1] Bad segment, adjusted size..
[139:1:1] (spp_sdf) SDF..
\end{verbatim}
\footnotesize \textbf{Incident summary}: Server 147.32.84.165 is infected with the \textsc{win.trojan.cryptodefence} ransomware. Alerts show the server is making outbound command and control (C2) connections to 222.88.205.195. This indicates that the ransomware is active and may be preparing to encrypt files or has already begun doing so.\\

\textbf{Response actions (condensed)}:

\footnotesize 1. Disconnect the Ethernet cable of the infected server at 147.32.84.165 to sever its network connection. Concurrently, configure a rule on the main switch/firewall to block all outbound traffic to the C2 server 222.88.205.195.\\

2. Analyze the central switch to scan all network traffic from both subnetworks A and B for any other hosts attempting to make connections to the malicious IP 222.88.205.195.\\

3. Before altering the infected server, create a complete bit-for-bit forensic image of its hard drive. This preserves the ransomware executable, encrypted files, and other evidence for future analysis.\\

4. Wipe the hard drive of 147.32.84.165. If other infected machines were discovered, they must also be taken offline and wiped.\\

5. Upgrade all servers from Windows XP SP2 (which is obsolete) to a modern operating system that receives security patches.\\

6. Restore the server's data from a trusted backup. Once the server is rebuilt with a modern operating system, reconnect it to the network and closely monitor for any anomalous activity.
\normalfont
\end{tcolorbox}

\vspace{2mm}
\noindent\textit{\textbf{Evaluation metrics.}}
We consider three evaluation metrics:
\begin{enumerate}
  \item \textit{Recovery time $T$.}
\begin{itemize}
  \item This metric represents the number of actions to recover from the incident. Following the \textsc{mitre d3fend} taxonomy (see \cite{kaloroumakis2021d3fend}), we define an incident to be recovered if the following criteria are met:
\begin{itemize}
\item \textsc{contained}: the attack has been isolated and stopped from spreading.
\item \textsc{assessed}: the scope and severity of the attack have been determined.
\item \textsc{preserved}: forensic evidence related to the incident has been preserved.
\item \textsc{evicted}: the attacker's access has been revoked, and potentially malicious code or processes have been removed from the system.
\item \textsc{hardened}: the system has been hardened to prevent recurrence of the same attack.
\item \textsc{restored}: services have been restarted and user access has been restored.
\end{itemize}
For example, the optimal recovery time for the example incident in the box above is $T=6$.
\end{itemize}
\item \textit{Percentage of ineffective actions.}
\begin{itemize}
  \item An action is ineffective if it is either hallucinated [cf.~Def.~\ref{def:hallucination}] or includes unnecessary steps.
\end{itemize}
\item \textit{Percentage of failed recoveries.}
\begin{itemize}
  \item Recovery has failed when the LLM generates a sequence of actions that does not recover the system.
\end{itemize}
\end{enumerate}    
\begin{remark}
The recovery time $T$ measures the number of actions to recover, rather than the elapsed wall-clock time. Hence, $T$ quantifies the decision-making efficiency.
\end{remark}  

\vspace{2mm}
\noindent\textit{\textbf{Instantiation of our framework.}}
We instantiate our framework with $N=3$ candidate actions and use parameter $\beta=0.9$ in the consistency function; cf.~\eqref{eq:self_consistency}. To define the LLM, we use a fine-tuned version of \textsc{deepseek-r1-14b} \cite{deepseekai2025deepseekr1incentivizingreasoningcapability}, which we run on $4\times$ RTX 8000 GPUs with $4$-bit quantization. For details about the fine-tuning, see our previous work \cite{hammar2025incidentresponseplanningusing}.

To configure the consistency threshold $\gamma$ [cf.~\eqref{eq:threshold_rule}], we collect a calibration dataset of $n=100$ sets of candidate actions where the action prescribed by the policy $\pi_{\gamma}$ is hallucinated.\footnote{We obtain the samples by prompting the LLM with details about example incidents and manually checking which actions are hallucinations.} Figure \ref{fig:threshold_tuning} shows the distribution of consistency values in this dataset. Based on this distribution, we configure the consistency threshold to be $\gamma=0.9$, which ensures that the hallucination probability is upper bounded by $0.05$; cf.~Prop.~\ref{prop:abstention_bound}. For further experimental details, see Appendix~\ref{app:experimental_setup}.

\begin{figure}[H]
  \centering
  \scalebox{0.78}{
    \includegraphics{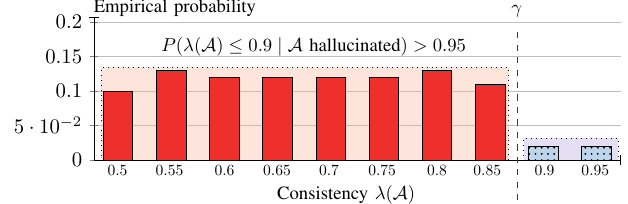}
  }
  \caption{Empirical distribution of the consistency $\lambda(\mathcal{A})$ [cf.~\eqref{eq:self_consistency}] in the calibration dataset of $n=100$ sets of candidate actions where the action prescribed by the policy $\pi_{\gamma}$ is hallucinated; cf.~\eqref{eq:threshold_rule}.}
  \label{fig:threshold_tuning}
\end{figure}

\vspace{2mm}
\noindent\textit{\textbf{Feedback.}}
If the policy $\pi_{\gamma}$ [cf.~\eqref{eq:threshold_rule}] recommends abstention, we collect feedback $R^k_t$ in terms of an evaluation of the action that would have been taken if abstention did not occur. We define this feedback to be a textual description explaining why the action is either effective or ineffective, which we then append to the context of the LLM to generate new actions.

\vspace{2mm}
\noindent\textit{\textbf{Baselines.}}
We compare our framework with three frontier LLMs: \textsc{deepseek-r1} \cite{deepseekai2025deepseekr1incentivizingreasoningcapability}, \textsc{gemini 2.5 pro} \cite{comanici2025gemini25pushingfrontier}, and \textsc{openai o3} \cite{openai2024gpt4technicalreport}. These LLMs have significantly more parameters and longer context windows than the LLM we use to instantiate our framework (\textsc{deepseek-r1-14b}); see Table \ref{tab:parameter_counts}. 

\begin{table}[H]
  \centering
  \scalebox{0.95}{
    \begin{tabular}{lll} \toprule
\rowcolor{lightgray}
      {\textit{Model}} & {\textit{Number of parameters}} & {\textit{Context window size}}  \\ \midrule
      \rowcolor{lightblue}
      \textsc{our framework} & $14$ billion & $128,000$  \\
      \textsc{deepseek-r1} \cite{deepseekai2025deepseekr1incentivizingreasoningcapability} & $671$ billion \cite {deepseekai2025deepseekr1incentivizingreasoningcapability} & $128,000$\\      
      \textsc{gemini 2.5 pro} \cite{comanici2025gemini25pushingfrontier} & unknown ($\geq 100$ billion)  & $1$ million\\
      \textsc{openai o3} \cite{openai2024gpt4technicalreport} & unknown ($\geq 100$ billion) & $200,000$\\
    \bottomrule\\
  \end{tabular}}
  \caption{Comparison between the sizes of different LLMs.}\label{tab:parameter_counts}
\end{table}

\subsection{Evaluation Results}
The results are summarized in Fig.~\ref{fig:eval_bars_1}. Across all evaluation datasets, our framework achieves the shortest recovery time. On average, the recovery time of our framework is $12.02$ compared to $16.21$ for the next best model. Among the frontier LLMs, we observe that \textsc{gemini 2.5 pro} performs best on average, whereas the difference in performance between \textsc{openai o3} and \textsc{deepseek-r1} is not statistically significant.
\begin{figure}
  \centering      
  \scalebox{0.75}{
   \includegraphics{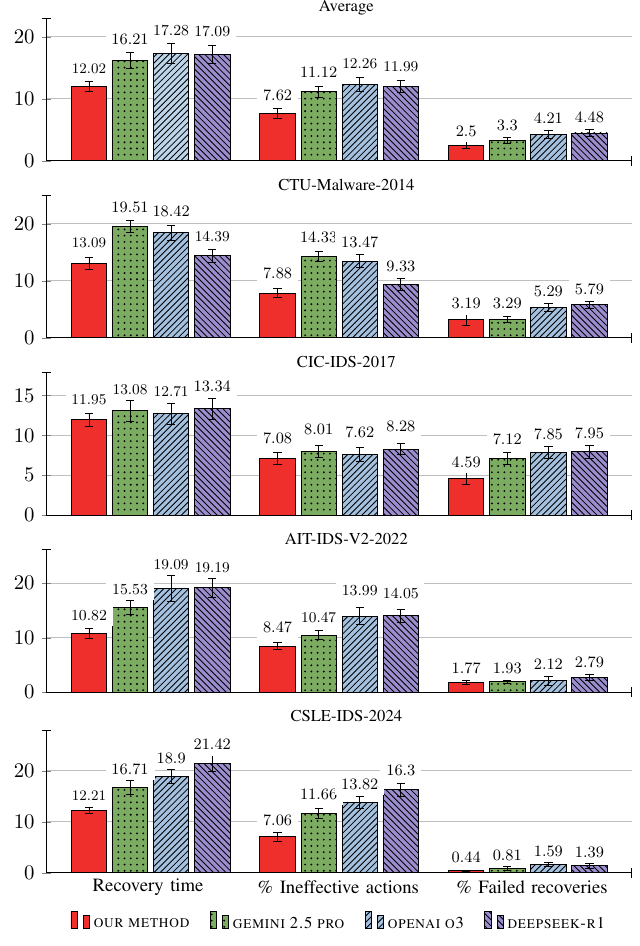}
  }
  \caption{Evaluation results ($\downarrow$ better): comparison between our framework and frontier LLMs. Bar colors relate to different models; bar groups indicate performance metrics; numbers and error bars indicate the mean and the standard deviation from $5$ evaluations with different random seeds.}
  \label{fig:eval_bars_1}
\end{figure}

To understand the relative importance of each step of our framework (i.e., lookahead, ICL, and abstention), we evaluate our framework with and without each step. The results are shown in Fig.~\ref{fig:eval_bars_2}. We observe performance degradations when each step is removed. In particular, when removing both the lookahead and the ICL, the average completion time increases from $12$ to $21$. Moreover, when removing the abstention, the average hallucination probability increases from $0.02$ to $0.06$.

Figure~\ref{fig:abstention} shows the average number of times our framework requests feedback, i.e., the number of times the consistency of the candidate actions is below the consistency threshold $\gamma=0.9$; cf.~\eqref{eq:threshold_rule}. We observe that the number of feedback requests varies significantly between the evaluation datasets, with an average of $2.24$. Moreover, we note that our framework leads to $1.13$ unnecessary feedback requests on average.

Figure~\ref{fig:regret} shows the average regret [cf.~\eqref{eq:bayes_regret}] of the ICL process in our framework. We observe that the regret plateaus after 8 iterations of ICL. This convergence indicates that a near-optimal action in each step of the response plan is identified after getting feedback on the quality of $8$ actions.

Finally, Fig.~\ref{fig:planning_scale} shows the compute time per time step of the lookahead optimization in our framework for varying numbers of candidate actions $N$. We observe that the compute time increases linearly with $N$ when the lookahead predictions are computed sequentially. However, we also find that by parallelizing the computation across multiple GPUs, the planning time remains nearly constant as $N$ increases.

\begin{figure}
  \centering
  \scalebox{0.75}{
    \includegraphics{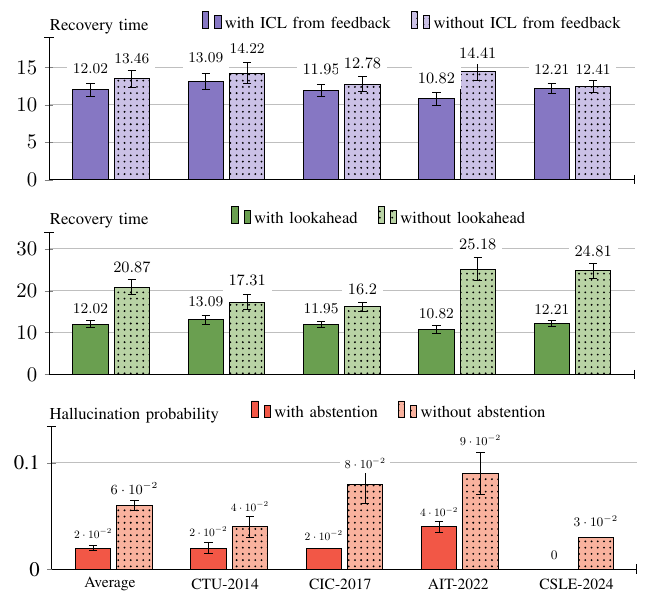}
  }
  \caption{Ablation results. Bars relate to our framework with and without ICL (upper row), lookahead (middle row), and abstention (lower row); bar groups indicate the evaluation dataset; numbers and error bars indicate the mean and the standard deviation from $5$ evaluations with different random seeds.}
  \label{fig:eval_bars_2}
\end{figure}

\begin{figure}
  \centering
  \scalebox{0.75}{
    \includegraphics{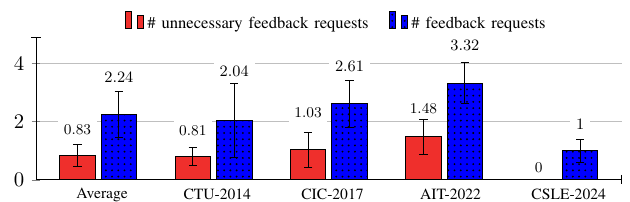}
  }
  \caption{The average number of feedback requests; bar groups indicate the evaluation dataset; numbers and error bars indicate the mean and the standard deviation from $5$ evaluations with different random seeds.}
  \label{fig:abstention}
\end{figure}

\subsection{Discussion of the Evaluation Results}
Our results show that large language models (LLMs) are promising tools for supporting security management tasks in networked systems, particularly open-ended tasks like incident response. Compared to frontier LLMs, our framework achieves consistently better performance across all evaluation metrics. This performance advantage is driven by the lookahead and ICL steps of our framework, as shown in the ablation study.

In the context of incident response, our framework can be compared against incident response playbooks that map incident scenarios to sequences of response actions \cite{10.1504/IJICS.2007.012248,playbook_response}, such as those provided by Splunk \cite{splunk_playbook}, CISA \cite{cisa_playbook}, and OASIS \cite{oasis_playbook}. Although incident response playbooks can be effective, they rely on security experts for configuration. As a consequence, they are difficult to keep up-to-date with evolving security threats and system architectures \cite{10646756}. Another common critique of playbooks is that they consist of generic response actions that are difficult for non-experts to interpret and execute effectively \cite{10.1145/3491102.3517559}. By contrast, our framework does not rely on domain experts for configuration and generates more precise and context-specific response actions. We provide a detailed comparison with playbooks in Appendix~\ref{app:playbooks}.

From a reliability perspective, the chief challenge when using LLMs as decision support is the risk of hallucination. We prove both experimentally and theoretically that our framework mitigates this risk. In particular, we have shown that the hallucination risk of our framework can be controlled by tuning the consistency threshold; see Prop.~\ref{prop:abstention_bound} and Fig.~\ref {fig:eval_bars_2}.

\begin{figure}
  \centering   
  \scalebox{0.75}{
    \includegraphics{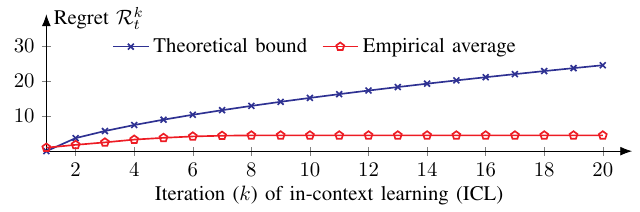}
  }
  \caption{The average regret based on $20$ iterations of ICL. The parameters $C$ and $|\mathcal{A}|$ in Prop.~\ref{prop:regret_bound} are set to $1$ and $10$, respectively.}
  \label{fig:regret}
\end{figure}

\begin{figure}
  \centering
  \scalebox{0.75}{
    \includegraphics{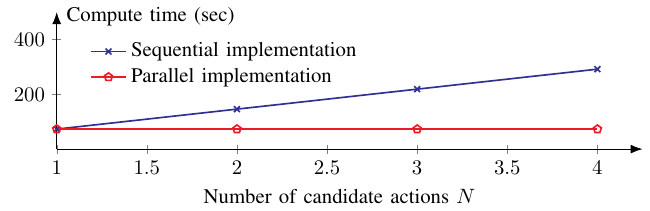}
  }
  \caption{Time required (per time step) to execute the lookahead optimization for a varying number of candidate actions $N$. The average planning times were computed based on $5$ executions with RTX 8000 GPUs.}
  \label{fig:planning_scale}
\end{figure}
\section{Conclusion}
We introduce a principled framework for using a large language model (LLM) as decision support in security management. In our framework, we use the LLM to generate candidate actions and select among them through lookahead optimization. We then evaluate the consistency of the candidate actions. If the consistency is low, our framework abstains from the selected action and instead collects external feedback, e.g., from a digital twin or a security expert. Our analysis provides theoretical guarantees on hallucination risk and regret, while our empirical evaluation demonstrates improved planning performance compared to frontier LLMs.

\vspace{2mm}
\noindent\textit{\textbf{Future work.}} The main direction of future work is to evaluate our framework on additional security tasks to better understand its empirical performance. We also plan to investigate foundation models beyond LLMs, such as time-series models.
\clearpage
\section*{Acknowledgment}
This research is supported by the Swedish Research Council under contract 2024-06436.

\appendices

\section{Example Prompt Template}\label{app:prompt}
All prompt templates that we use for the experimental evaluation are available in the repository at \cite{llm_source_kim}. We provide an example prompt template below.

\begin{tcolorbox}[float,colback=lightgray!60, colframe=lightgray!30, sharp corners, boxrule=0pt, title=\small \textcolor{black}{\underline{Prompt template for generating a response action}.}]
\footnotesize 
Below is a system description, a sequence of network logs (e.g., from an intrusion detection system), a description of a cybersecurity incident, the current state of the recovery from the incident, a list of previously executed recovery actions, and an instruction that describes a task. Write a response that appropriately completes the request. Before generating the response, think carefully about the system, the logs, and the instruction, then create a step-by-step chain of thoughts to ensure a logical and accurate response.\\\\
\#\#\# System: ...\\
$\quad$\\
\#\#\# Logs: ...\\
$\quad$\\
\#\#\# Incident: ...\\

\#\#\# State: ...\\

\#\#\# Previous recovery actions: ...\\

\#\#\# Instruction:
You are a security operator with advanced knowledge in cybersecurity and IT systems. You have been given information about a security incident and should generate the next suitable action for recovering the system from the incident. Your suggested action should be based on the logs, the system description, the current state, and the previous recovery actions only.
Make sure that the suggested recovery action is consistent with the system description and the logs and that you do not repeat any action that has already been performed.
The goal when selecting the recovery action is to change the state so that one of the state properties that is currently 'false' becomes 'true'. The ideal recovery action sequence is: 1. contain the attack 2. gather information 3. preserve evidence 4. eradicate the attacker 5. harden the system 6. recover operational services.
When selecting the recovery action, make sure that it is concrete and actionable and minimizes unnecessary service disruptions. Vague or unnecessary actions will not change the state and should be avoided.
Return a JSON object with two properties: 'Action' and 'Explanation', both of which should be strings.
The property 'Action' should be a string that concisely describes the concrete recovery action.
The property 'Explanation' should be a string that concisely explains why you selected the recovery action and motivates why the action is needed.\\
$\quad$\\
\#\#\# Response: \textless{}think\textgreater{}
\normalsize
\normalfont
\end{tcolorbox}

\section{Pseudocode}\label{app:pseudocode}
The pseudocode of our framework is listed in Alg. \ref{alg:our_method}.
\begin{algorithm}
\footnotesize
\DontPrintSemicolon
\caption{Our framework for planning with an LLM.}\label{alg:our_method}
Initialize time step $t\leftarrow 0$.\;
\While{$\text{the LLM does not think the task is completed}$}{
Initialize ICL iteration $k\leftarrow 0$.\;
Generate a set of candidate actions $\mathcal{A}^k_t$.\;
\While{$\pi_{\gamma}(\mathcal{A}^k_t) = \emptyset$}{
Collect feedback $R^k_t$ and update the LLM's context.\;
Update ICL iteration $k\leftarrow k+1$.\;
Generate a new set of candidate actions $\mathcal{A}^k_t$.\;
}
Select action $\mathbf{a}_t=\pi_{\gamma}(\mathcal{A}^k_t)$ and add it to the LLM's context.\;
Update the time step as $t \leftarrow t+1$.\;
}
Return $\mathbf{a}_0,\mathbf{a}_1,\hdots,\mathbf{a}_{T-1}$ as decision support to a security operator.
\normalsize
 \end{algorithm}

\section{Experimental Setup}\label{app:experimental_setup}
We preprocessed the evaluation datasets to ensure a consistent data format for evaluating the LLMs. Specifically, the datasets CIC-IDS-2017 \cite{icissp18} and CTU-Malware-2014 \cite{GARCIA2014100} are distributed as PCAP files. To convert these files into the textual logs required to prompt the LLM for the incident response task, we replayed the PCAP files through the Snort Intrusion Detection System (IDS) \cite{snort}. We configured Snort with the community ruleset v2.9.17.1. For the datasets that already include security logs (e.g., AIT-IDS-V2-2022 \cite{ait_ids_1} and CSLE-IDS-2024 \cite{dsn24_hammar_stadler}), we used the logs in their original format

The logs in the original evaluation datasets are labeled as either benign or malign, but they do not contain response plans. To evaluate the quality of the actions generated by the LLM, we manually constructed ground-truth response plans based on the metadata provided with each dataset. This metadata includes attack timelines, attacker IP addresses, victim IP addresses, and the specific attack vectors used. Using this information, we labeled each attack according to the MITRE ATT\&CK taxonomy and derived ground-truth response plans following standard incident response procedures. We have made all response plans for the evaluation available at \cite{llm_source_kim}.

\vspace{2mm}

\noindent\textit{\textbf{Choice of LLM.}} Our framework can be instantiated with any LLM; we choose one with 14 billion parameters due to hardware limitations. There are a few open-source models of this size, but not that many. We choose \textsc{deepseek-r1-14b} for its reasoning abilities and chain-of-thought training \cite{10.5555/3600270.3602070}. Another benefit of \textsc{deepseek-r1-14b} is that it provides a relatively long context window for a model of its size, which is advantageous for in-context learning (ICL).

\vspace{2mm}

\noindent\textit{\textbf{Justification for using the recovery time metric.}} The recovery time $T$ measures the number of response actions required to recover from the attack, rather than the elapsed wall-clock time. As such, $T$ quantifies the efficiency of the decision-making process, abstracting away from implementation-specific timing effects. This provides a general and implementation-agnostic metric for comparing different incident response plans. While no single metric can capture all dimensions of incident response, recovery time is widely used in both research and practice as an indicator of recovery performance; see e.g., \cite{hussey2025instana}. Moreover, it can be complemented with additional performance metrics that reflect specific organizational goals, such as risk preferences or service availability constraints.

The optimal recovery time is attack-dependent. For instance, in the event of a denial of service (DoS) attack, containment and eviction can often be achieved simultaneously by appropriate traffic filtering, which both isolates the attack and revokes attacker access. In contrast, an advanced persistent threat (APT) typically requires multiple actions to complete these stages, leading to longer recovery times \cite{10955193,tifs_25_HLALB}.

\section{Comparison  with Incident Response Playbooks}\label{app:playbooks}
In the context of incident response, our framework can be seen as a more dynamic complement to incident response playbooks \cite{playbook_response}, which makes it easy to adopt in existing workflows of security operation centers \cite{10.5555/3767870.3767878}. In the following, we describe the main similarities and advantages of our framework in comparison with incident response playbooks.

\vspace{2mm}

\noindent\textit{\textbf{Similarities.}}
Both the response plans generated by our framework and playbooks are expressed in text and organized around the standard stages of incident response, such as containment and eviction. Moreover, both our framework and playbooks are intended to provide decision support rather than automating the response. Another similarity is that both of them categorize attacks according to the MITRE ATT\&CK taxonomy. Despite these similarities, our framework provides several advantages over playbooks, as listed below.

\vspace{2mm}

\noindent\textit{\textbf{Advantages.}}
The main advantage is that our framework does not rely on domain experts for configuration, as playbooks do. This makes our framework more flexible. Another benefit is that our framework generates more precise and context-specific response actions than playbooks, which often prescribe vague actions that are not directly executable, as reported in several empirical studies; see e.g., \cite{10.1145/3491102.3517559, 10646756}. By contrast, our framework produces executable actions tailored to the system logs, which helps the operator to prioritize log entries. As a consequence, our framework provides a higher degree of automation than conventional playbooks, shifting the operator’s role toward validating the generated response plan rather than sifting through logs. Beyond these advantages, a fundamental difference between our framework and playbooks is that a response plan generated by our framework typically spans 2–3 pages, whereas playbooks often span $40$+ pages (see e.g., \cite{cisa_playbook}), many of which contain information that is not directly relevant to the response (e.g., general security principles). Thus, our framework serves as a more actionable and dynamic complement to conventional response playbooks.

\section{Consistency Examples}\label{app:consistency}
To illustrate the consistency function $\lambda$ [cf.~\eqref{eq:self_consistency}], consider a scenario where a security operator is responding to a potential SQL injection attack. Assume that the LLM generates the following three candidate actions (i.e., $N=3$):
\begin{itemize}
\item \textbf{$\mathbf{a}^1$}: Isolate the database server.
\item \textbf{$\mathbf{a}^2$}: Block the offending IP address.
\item \textbf{$\mathbf{a}^3$}: Update the firewall rules.
\end{itemize}

\vspace{2mm}

\noindent\textit{\textbf{Scenario 1.}} Suppose that the LLM predicts that all three actions are effective, leading to similar estimates to the time remaining to resolve the incident, e.g., $T_{t+1}^1 = 10, T_{t+1}^2 = 12$, and $T_{t+1}^3 = 11$. In this case, the average value is $11$ and the consistency with $\beta=0.9$ is 
\begin{align*}
  &\lambda(\mathcal{A}_t) = \\
  &\exp\left(\frac{-0.9\left((10-11)^2 + (12-11)^2 + (11-11)^2\right)}{3}\right)\\
                         &= 0.548.
\end{align*}

\vspace{2mm}

\noindent\textit{\textbf{Scenario 2.}} Suppose that the LLM is confused and provides wildly different estimates for the same actions, e.g., $T_{t+1}^1 = 5, T_{t+1}^2 = 60$, and $T_{t+1}^3 = 120$. In this case, the average value is $61.66$ and the consistency with $\beta=0.9$ is 
\begin{align*}
  &\lambda(\mathcal{A}_t)= \exp\bigg(\\
&  \frac{-0.9\left((5-61.66)^2 + (60-61.66)^2 + (120-61.66)^2\right)}{3}\bigg)\\
                         &\approx 0.
\end{align*}

\bibliographystyle{IEEEtran}
\bibliography{references}

@ARTICLE{hammar_stadler_tnsm,
  author={Hammar, Kim and Stadler, Rolf},
  journal={IEEE Transactions on Network and Service Management},
  title={Intrusion Prevention Through Optimal Stopping},
  year={2022},
  volume={19},
  number={3},
  pages={2333-2348},
  doi={10.1109/TNSM.2022.3176781}}

@inproceedings{snort,
author = {Roesch, Martin},
title = {Snort - Lightweight Intrusion Detection for Networks},
year = {1999},
publisher = {USENIX Association},
address = {USA},
booktitle = {Proceedings of the 13th USENIX Conference on System Administration},
pages = {229–238},
numpages = {10},
location = {Seattle, Washington},
series = {LISA '99}
}

@incollection{strom2018mitre,
  title={{Mitre ATT\&CK}: Design and philosophy},
  author={Strom, Blake E and Applebaum, Andy and Miller, Doug P and Nickels, Kathryn C and Pennington, Adam G and Thomas, Cody B},
  booktitle={Technical report},
  year={2018},
  publisher={MITRE}
}

@ARTICLE{10955193,
  author={Hammar, Kim and Li, Tao and Stadler, Rolf and Zhu, Quanyan},
  journal={IEEE Transactions on Information Forensics and Security},
  title={Adaptive Security Response Strategies Through Conjectural Online Learning},
  year={2025},
  volume={20},
  number={},
  pages={4055-4070},
  doi={10.1109/TIFS.2025.3558600}}

@Misc{wazuh,
  author = 	 {{Wazuh Inc}},
  title = 	 {Wazuh - The Open Source Security Platform},
  year = 	 2022,
  url={https://wazuh.com/}
}

@INPROCEEDINGS{dsn24_hammar_stadler,
  author={Hammar, Kim and Stadler, Rolf},
  booktitle={2024 54th Annual IEEE/IFIP International Conference on Dependable Systems and Networks (DSN)},
  title={Intrusion Tolerance for Networked Systems through Two-Level Feedback Control},
  year={2024},
  volume={},
  number={},
  pages={338-352},
  doi={10.1109/DSN58291.2024.00042}}

@misc{openai2024gpt4technicalreport,
      title={{GPT-4} Technical Report}, 
      author={OpenAI and Josh Achiam and Steven Adler and others},
      year={2024},
      eprint={2303.08774},
      archivePrefix={arXiv},
      primaryClass={cs.CL},
      url={https://arxiv.org/abs/2303.08774},
      note={\url{https://arxiv.org/abs/2303.08774}}
}

@misc{deepseekai2025deepseekr1incentivizingreasoningcapability,
      title={{DeepSeek-R1}: Incentivizing Reasoning Capability in {LLM}s via Reinforcement Learning}, 
      author={DeepSeek-AI and Daya Guo and Dejian Yang and others},
      year={2025},
      eprint={2501.12948},
      archivePrefix={arXiv},
      primaryClass={cs.CL},
      url={https://arxiv.org/abs/2501.12948}
}

@inproceedings{net_llm,
author = {Wu, Duo and Wang, Xianda and Qiao, Yaqi and Wang, Zhi and Jiang, Junchen and Cui, Shuguang and Wang, Fangxin},
title = {{NetLLM}: Adapting Large Language Models for Networking},
year = {2024},
isbn = {9798400706141},
publisher = {Association for Computing Machinery},
address = {New York, NY, USA},
url = {https://doi.org/10.1145/3651890.3672268},
doi = {10.1145/3651890.3672268},
booktitle = {Proceedings of the ACM SIGCOMM 2024 Conference},
pages = {661–678},
numpages = {18},
location = {Sydney, NSW, Australia},
series = {ACM SIGCOMM '24}
}

@inproceedings {pentest_gpt,
author = {Gelei Deng and Yi Liu and V{\'\i}ctor Mayoral-Vilches and Peng Liu and Yuekang Li and Yuan Xu and Tianwei Zhang and Yang Liu and Martin Pinzger and Stefan Rass},
title = {{PentestGPT}: Evaluating and Harnessing Large Language Models for Automated Penetration Testing},
booktitle = {33rd USENIX Security Symposium (USENIX Security 24)},
year = {2024},
isbn = {978-1-939133-44-1},
address = {Philadelphia, PA},
pages = {847--864},
url = {https://www.usenix.org/conference/usenixsecurity24/presentation/deng},
publisher = {USENIX Association},
month = aug
}

@inproceedings{10.5555/3600270.3602070,
author = {Wei, Jason and Wang, Xuezhi and Schuurmans, Dale and Bosma, Maarten and Ichter, Brian and Xia, Fei and Chi, Ed H. and Le, Quoc V. and Zhou, Denny},
title = {Chain-of-thought prompting elicits reasoning in large language models},
year = {2022},
isbn = {9781713871088},
publisher = {Curran Associates Inc.},
address = {Red Hook, NY, USA},
booktitle = {Proceedings of the 36th International Conference on Neural Information Processing Systems},
articleno = {1800},
numpages = {14},
location = {New Orleans, LA, USA},
series = {NIPS '22}
}

@inproceedings{10.5555/3692070.3693331,
author = {Liu, Zhihan and Hu, Hao and Zhang, Shenao and Guo, Hongyi and Ke, Shuqi and Liu, Boyi and Wang, Zhaoran},
title = {Reason for future, act for now: a principled architecture for autonomous {LLM} agents},
year = {2024},
publisher = {JMLR.org},
booktitle = {Proceedings of the 41st International Conference on Machine Learning},
articleno = {1261},
numpages = {76},
location = {Vienna, Austria},
series = {ICML'24}
}

@misc{google_llm_recovery,
      title={From Naptime to Big Sleep: Using Large Language Models To Catch Vulnerabilities In Real-World Code},
      author={Miltos Allamanis and Martin Arjovsky and Charles Blundell and others},
      year={2024},
      note={\url{https://googleprojectzero.blogspot.com/2024/10/from-naptime-to-big-sleep.html}}
}

@misc{rodriguez2025frameworkevaluatingemergingcyberattack,
      title={A Framework for Evaluating Emerging Cyberattack Capabilities of {AI}}, 
      author={Mikel Rodriguez and Raluca Ada Popa and Four Flynn and Lihao Liang and Allan Dafoe and Anna Wang},
      year={2025},
      eprint={2503.11917},
      archivePrefix={arXiv},
      primaryClass={cs.CR},
      url={https://arxiv.org/abs/2503.11917},
      note={\url{https://arxiv.org/abs/2503.11917}}
}

@conference{icissp18,
author={Iman Sharafaldin and Arash {Habibi Lashkari} and Ali A. Ghorbani},
title={Toward Generating a New Intrusion Detection Dataset and Intrusion Traffic Characterization},
booktitle={Proceedings of the 4th International Conference on Information Systems Security and Privacy - ICISSP},
year={2018},
pages={108-116},
publisher={SciTePress},
organization={INSTICC},
doi={10.5220/0006639801080116},
isbn={978-989-758-282-0},
issn={2184-4356},
}

@article{GARCIA2014100,
title = {An empirical comparison of botnet detection methods},
journal = {Computers \& Security},
volume = {45},
pages = {100-123},
year = {2014},
issn = {0167-4048},
doi = {https://doi.org/10.1016/j.cose.2014.05.011},
url = {https://www.sciencedirect.com/science/article/pii/S0167404814000923},
author = {S. García and M. Grill and J. Stiborek and A. Zunino}
}

@inproceedings{ait_ids_1,
author = {Landauer, Max and Skopik, Florian and Wurzenberger, Markus},
title = {Introducing a New Alert Data Set for Multi-Step Attack Analysis},
year = {2024},
isbn = {9798400709579},
publisher = {Association for Computing Machinery},
address = {New York, NY, USA},
url = {https://doi.org/10.1145/3675741.3675748},
doi = {10.1145/3675741.3675748},
booktitle = {Proceedings of the 17th Cyber Security Experimentation and Test Workshop},
pages = {41–53},
numpages = {13},
location = {Philadelphia, PA, USA},
series = {CSET '24}
}

@INPROCEEDINGS{8418627,
  author={Huang, Danny Yuxing and Aliapoulios, Maxwell Matthaios and Li, Vector Guo and Invernizzi, Luca and Bursztein, Elie and McRoberts, Kylie and Levin, Jonathan and Levchenko, Kirill and Snoeren, Alex C. and McCoy, Damon},
  booktitle={2018 IEEE Symposium on Security and Privacy (SP)},
  title={Tracking Ransomware End-to-end},
  year={2018},
  volume={},
  number={},
  pages={618-631},
  doi={10.1109/SP.2018.00047}}

@misc{wannacry_nhgs,
    author = {Amyas Morse},
    title = {Investigation: {WannaCry} cyber attack and the {NHS}},
    year = {2017},
    publisher = {National Audit Office UK},
    note= {National Audit Office UK},
}

@techreport{kaloroumakis2021d3fend,
  title        = {Toward a Knowledge Graph of Cybersecurity Countermeasures},
  author       = {Kaloroumakis, Peter E. and Smith, Michael J.},
  institution  = {The MITRE Corporation},
  address      = {Annapolis Junction, MD},
  year         = {2021},
  type         = {Technical Report},
  note         = {Approved for Public Release; Distribution Unlimited},
  url          = {https://d3fend.mitre.org/resources/D3FEND.pdf}
}

@Misc{llm_source_kim,
  author = 	 {Kim Hammar and Tansu Alpcan and Emil C. Lupu},
  title = 	 {{Supplementary material of the paper "Hallucination-Resistant Security Planning with a Large Language Model"}},
  year = 	 2026,
  note = 	 {Code for fine-tuning: \url{https://github.com/Kim-Hammar/llm_recovery}, code for our testbed: \url{https://github.com/Kim-Hammar/csle}, dataset: \url{https://huggingface.co/datasets/kimhammar/CSLE-IncidentResponse-V1}, video demonstration: \url{https://www.youtube.com/watch?v=XXo4Y6LCWk4}, fine-tuned {LLM} and prompts: \url{https://huggingface.co/kimhammar/LLMIncidentResponse}},
}

@INPROCEEDINGS{playbook_response,
  author={Applebaum, Andy and Johnson, Shawn and Limiero, Michael and Smith, Michael},
  booktitle={2018 National Cyber Summit (NCS)},
  title={Playbook Oriented Cyber Response},
  year={2018},
  volume={},
  number={},
  pages={8-15},
  doi={10.1109/NCS.2018.00007}}

@article{10.1504/IJICS.2007.012248,
author = {Stakhanova, Natalia and Basu, Samik and Wong, Johnny},
title = {A taxonomy of intrusion response systems},
year = {2007},
issue_date = {January 2007},
publisher = {Inderscience Publishers},
address = {Geneva 15, CHE},
volume = {1},
number = {1/2},
issn = {1744-1765},
url = {https://doi.org/10.1504/IJICS.2007.012248},
doi = {10.1504/IJICS.2007.012248},
journal = {Int. J. Inf. Comput. Secur.},
month = jan,
pages = {169–184},
numpages = {16}
}

@misc{tifs_25_HLALB,
      title={Adaptive Network Security Policies via Belief Aggregation and Rollout}, 
      author={Kim Hammar and Yuchao Li and Tansu Alpcan and Emil C. Lupu and Dimitri Bertsekas},
      year={2025},
      eprint={2507.15163},
      archivePrefix={arXiv},
      primaryClass={eess.SY},
      url={https://arxiv.org/abs/2507.15163}
}

@ARTICLE{10991969,
  author={Mohammadi, Hamoun and Davis, Jonathan J. and Kiely, Mitchell},
  journal={IEEE Intelligent Systems},
  title={Leveraging Large Language Models for Autonomous Cyber Defense: Insights from {CAGE-2} Simulations}, 
  year={2025},
  volume={},
  number={},
  pages={1-8},
  doi={10.1109/MIS.2025.3568209}}

@misc{castro2025largelanguagemodelsautonomous,
      title={Large Language Models are Autonomous Cyber Defenders}, 
      author={Sebastián R. Castro and Roberto Campbell and Nancy Lau and Octavio Villalobos and Jiaqi Duan and Alvaro A. Cardenas},
      year={2025},
      eprint={2505.04843},
      archivePrefix={arXiv},
      primaryClass={cs.AI},
      url={https://arxiv.org/abs/2505.04843}
}

@misc{yadkori2024mitigatingllmhallucinationsconformal,
      title={Mitigating {LLM} Hallucinations via Conformal Abstention}, 
      author={Yasin Abbasi Yadkori and Ilja Kuzborskij and David Stutz and András György and Adam Fisch and Arnaud Doucet and Iuliya Beloshapka and Wei-Hung Weng and Yao-Yuan Yang and Csaba Szepesvári and Ali Taylan Cemgil and Nenad Tomasev},
      year={2024},
      eprint={2405.01563},
      archivePrefix={arXiv},
      primaryClass={cs.LG},
      url={https://arxiv.org/abs/2405.01563}
}

@misc{yan2024dependingshouldmentoringllm,
      title={Depending on yourself when you should: Mentoring {LLM} with {RL} agents to become the master in cybersecurity games}, 
      author={Yikuan Yan and Yaolun Zhang and Keman Huang},
      year={2024},
      eprint={2403.17674},
      archivePrefix={arXiv},
      primaryClass={cs.CR},
      url={https://arxiv.org/abs/2403.17674},
      note={\url{https://arxiv.org/abs/2403.17674}}
}

@inproceedings{DBLP:conf/iclr/0002WSLCNCZ23,
  author       = {Xuezhi Wang and
                  Jason Wei and
                  Dale Schuurmans and
                  Quoc V. Le and
                  Ed H. Chi and
                  Sharan Narang and
                  Aakanksha Chowdhery and
                  Denny Zhou},
  title        = {Self-Consistency Improves Chain of Thought Reasoning in Language Models},
  booktitle    = {The Eleventh International Conference on Learning Representations,
                  {ICLR} 2023, Kigali, Rwanda, May 1-5, 2023},
  publisher    = {OpenReview.net},
  year         = {2023},
  url          = {https://openreview.net/forum?id=1PL1NIMMrw},
  bibsource    = {dblp computer science bibliography, https://dblp.org}
}

@inproceedings{DBLP:conf/ndss/0012XW00S025,
  author       = {Ye Liu and
                  Yue Xue and
                  Daoyuan Wu and
                  Yuqiang Sun and
                  Yi Li and
                  Miaolei Shi and
                  Yang Liu},
  title        = {{PropertyGPT}: {LLM}-driven Formal Verification of Smart Contracts through
                  Retrieval-Augmented Property Generation},
  booktitle    = {32nd Annual Network and Distributed System Security Symposium, {NDSS}
                  2025, San Diego, California, USA, February 24-28, 2025},
  publisher    = {The Internet Society},
  year         = {2025},
  bibsource    = {dblp computer science bibliography, https://dblp.org}
}

@inproceedings{DBLP:conf/ndss/GohilDNSR25,
  author       = {Vasudev Gohil and
                  Matthew DeLorenzo and
                  Veera Vishwa Achuta Sai Venkat Nallam and
                  Joey See and
                  Jeyavijayan Rajendran},
  title        = {{LLMPirate}: {LLM}s for Black-box Hardware {IP} Piracy},
  booktitle    = {32nd Annual Network and Distributed System Security Symposium, {NDSS}
                  2025, San Diego, California, USA, February 24-28, 2025},
  publisher    = {The Internet Society},
  year         = {2025},
  bibsource    = {dblp computer science bibliography, https://dblp.org}
}

@inproceedings{DBLP:conf/ndss/YangL0L25,
  author       = {Yi Yang and
                  Jinghua Liu and
                  Kai Chen and
                  Miaoqian Lin},
  title        = {The Midas Touch: Triggering the Capability of {LLM}s for {RM-API} Misuse
                  Detection},
  booktitle    = {32nd Annual Network and Distributed System Security Symposium, {NDSS}
                  2025, San Diego, California, USA, February 24-28, 2025},
  publisher    = {The Internet Society},
  year         = {2025},
  bibsource    = {dblp computer science bibliography, https://dblp.org}
}

@inproceedings{DBLP:conf/ndss/DengLCBWLW025,
  author       = {Jiangyi Deng and
                  Xinfeng Li and
                  Yanjiao Chen and
                  Yijie Bai and
                  Haiqin Weng and
                  Yan Liu and
                  Tao Wei and
                  Wenyuan Xu},
  title        = {{RACONTEUR:} {A} Knowledgeable, Insightful, and Portable {LLM}-Powered
                  Shell Command Explainer},
  booktitle    = {32nd Annual Network and Distributed System Security Symposium, {NDSS}
                  2025, San Diego, California, USA, February 24-28, 2025},
  publisher    = {The Internet Society},
  year         = {2025},
  bibsource    = {dblp computer science bibliography, https://dblp.org}
}

@inproceedings{DBLP:conf/ndss/StafeevRSKP25,
  author       = {Aleksei Stafeev and
                  Tim Recktenwald and
                  Gianluca De Stefano and
                  Soheil Khodayari and
                  Giancarlo Pellegrino},
  title        = {YuraScanner: Leveraging {LLM}s for Task-driven Web App Scanning},
  booktitle    = {32nd Annual Network and Distributed System Security Symposium, {NDSS}
                  2025, San Diego, California, USA, February 24-28, 2025},
  publisher    = {The Internet Society},
  year         = {2025},
  bibsource    = {dblp computer science bibliography, https://dblp.org}
}

@online{hussey2025instana,
  author       = {Stephen Hussey},
  title        = {Resolve incidents faster with {IBM} {Instana} Intelligent Incident Investigation powered by agentic {AI}},
  year         = {2025},
  month        = {June},
  day          = {29},
  publisher    = {IBM}
}

@misc{comanici2025gemini25pushingfrontier,
      title={Gemini 2.5: Pushing the Frontier with Advanced Reasoning, Multimodality, Long Context, and Next Generation Agentic Capabilities}, 
      author={Gheorghe Comanici and Eric Bieber and Mike Schaekermann and others},
      year={2025},
      eprint={2507.06261},
      archivePrefix={arXiv},
      primaryClass={cs.CL},
      url={https://arxiv.org/abs/2507.06261}
}

@inproceedings{10.1145/3491102.3517559,
author = {Stevens, Rock and Votipka, Daniel and Dykstra, Josiah and Tomlinson, Fernando and Quartararo, Erin and Ahern, Colin and Mazurek, Michelle L.},
title = {How Ready is Your Ready? Assessing the Usability of Incident Response Playbook Frameworks},
year = {2022},
isbn = {9781450391573},
publisher = {Association for Computing Machinery},
address = {New York, NY, USA},
url = {https://doi.org/10.1145/3491102.3517559},
doi = {10.1145/3491102.3517559},
booktitle = {Proceedings of the 2022 CHI Conference on Human Factors in Computing Systems},
articleno = {589},
numpages = {18},
location = {New Orleans, LA, USA},
series = {CHI '22}
}

@INPROCEEDINGS{10646756,
  author={Schlette, Daniel and Empl, Philip and Caselli, Marco and Schreck, Thomas and Pernul, Günther},
  booktitle={2024 IEEE Symposium on Security and Privacy (SP)},
  title={Do You Play It by the Books? A Study on Incident Response Playbooks and Influencing Factors}, 
  year={2024},
  volume={},
  number={},
  pages={3625-3643},
  doi={10.1109/SP54263.2024.00060}}

@misc{splunk_playbook,
    author = {Splunk},
    title = {Automate incident response with playbooks and actions in {Splunk} Mission Control},
    year = {2025}
}

@misc{cisa_playbook,
    author = {CISA},
    title = {Cybersecurity Incident \& Vulnerability Response Playbooks},
    year = {2021}
}

@misc{oasis_playbook,
    author = {OASIS},
    title = {CACAO Security Playbooks Version 2.0},
    year = {2023}
}

@inproceedings{10.1145/3597503.3639121,
author = {Xia, Chunqiu Steven and Paltenghi, Matteo and Le Tian, Jia and Pradel, Michael and Zhang, Lingming},
title = {Fuzz4All: Universal Fuzzing with Large Language Models},
year = {2024},
isbn = {9798400702174},
publisher = {Association for Computing Machinery},
address = {New York, NY, USA},
url = {https://doi.org/10.1145/3597503.3639121},
doi = {10.1145/3597503.3639121},
booktitle = {Proceedings of the IEEE/ACM 46th International Conference on Software Engineering},
articleno = {126},
numpages = {13},
location = {Lisbon, Portugal},
series = {ICSE '24}
}

@inproceedings{DBLP:conf/ndss/LiuY0L25,
  author       = {Jinghua Liu and
                  Yi Yang and
                  Kai Chen and
                  Miaoqian Lin},
  title        = {Generating {API} Parameter Security Rules with {LLM} for {API} Misuse
                  Detection},
  booktitle    = {32nd Annual Network and Distributed System Security Symposium, {NDSS}
                  2025, San Diego, California, USA, February 24-28, 2025},
  publisher    = {The Internet Society},
  year         = {2025},
  bibsource    = {dblp computer science bibliography, https://dblp.org}
}

@inproceedings{DBLP:conf/ndss/HuL024,
  author       = {Peiwei Hu and
                  Ruigang Liang and
                  Kai Chen},
  title        = {{DeGPT:} Optimizing Decompiler Output with {LLM}},
  booktitle    = {31st Annual Network and Distributed System Security Symposium, {NDSS}
                  2024, San Diego, California, USA, February 26 - March 1, 2024},
  publisher    = {The Internet Society},
  year         = {2024},
  bibsource    = {dblp computer science bibliography, https://dblp.org}
}

@misc{chen2023universalselfconsistencylargelanguage,
      title={Universal Self-Consistency for Large Language Model Generation}, 
      author={Xinyun Chen and Renat Aksitov and Uri Alon and Jie Ren and Kefan Xiao and Pengcheng Yin and Sushant Prakash and Charles Sutton and Xuezhi Wang and Denny Zhou},
      year={2023},
      eprint={2311.17311},
      archivePrefix={arXiv},
      primaryClass={cs.CL},
      url={https://arxiv.org/abs/2311.17311}
}

@inproceedings{weng-etal-2023-large,
    title = "Large Language Models are Better Reasoners with Self-Verification",
    author = "Weng, Yixuan  and
      Zhu, Minjun  and
      Xia, Fei  and
      Li, Bin  and
      He, Shizhu  and
      Liu, Shengping  and
      Sun, Bin  and
      Liu, Kang  and
      Zhao, Jun",
    editor = "Bouamor, Houda  and
      Pino, Juan  and
      Bali, Kalika",
    booktitle = "Findings of the Association for Computational Linguistics: EMNLP 2023",
    month = dec,
    year = "2023",
    address = "Singapore",
    publisher = "Association for Computational Linguistics",
    url = "https://aclanthology.org/2023.findings-emnlp.167/",
    doi = "10.18653/v1/2023.findings-emnlp.167",
    pages = "2550--2575"
}

@article{ARAZZI2025100765,
title = {{NLP}-based techniques for Cyber Threat Intelligence},
journal = {Computer Science Review},
volume = {58},
pages = {100765},
year = {2025},
issn = {1574-0137},
doi = {https://doi.org/10.1016/j.cosrev.2025.100765},
url = {https://www.sciencedirect.com/science/article/pii/S1574013725000413},
author = {Marco Arazzi and Dincy {R. Arikkat} and Serena Nicolazzo and Antonino Nocera and Rafidha {Rehiman K.A.} and Vinod P. and Mauro Conti}
}

@inproceedings {299896,
author = {Xiaoyue Ma and Lannan Luo and Qiang Zeng},
title = {From One Thousand Pages of Specification to Unveiling Hidden Bugs: Large Language Model Assisted Fuzzing of Matter {IoT} Devices},
booktitle = {33rd USENIX Security Symposium (USENIX Security 24)},
year = {2024},
isbn = {978-1-939133-44-1},
address = {Philadelphia, PA},
pages = {4783--4800},
url = {https://www.usenix.org/conference/usenixsecurity24/presentation/ma-xiaoyue},
publisher = {USENIX Association},
month = aug
}

@inproceedings {299549,
author = {Peiyu Liu and Junming Liu and Lirong Fu and Kangjie Lu and Yifan Xia and Xuhong Zhang and Wenzhi Chen and Haiqin Weng and Shouling Ji and Wenhai Wang},
title = {Exploring {ChatGPT{\textquoteright}s} Capabilities on Vulnerability Management},
booktitle = {33rd USENIX Security Symposium (USENIX Security 24)},
year = {2024},
isbn = {978-1-939133-44-1},
address = {Philadelphia, PA},
pages = {811--828},
url = {https://www.usenix.org/conference/usenixsecurity24/presentation/liu-peiyu},
publisher = {USENIX Association},
month = aug
}

@inproceedings{hammar2025incidentresponseplanningusing,
  author = 	 {Kim Hammar and Tansu Alpcan and Emil C. Lupu},
      title={Incident Response Planning Using a Lightweight Large Language Model with Reduced Hallucination}, 
  booktitle    = {33rd Annual Network and Distributed System Security Symposium, {NDSS}
                  2026, San Diego, California, USA, February 23-27, 2026},
  publisher    = {The Internet Society},		  
  year = 	 2026
}

@inproceedings{10.5555/3495724.3495883,
author = {Brown, Tom B. and Mann, Benjamin and Ryder, Nick and Subbiah, Melanie and Kaplan, Jared and Dhariwal, Prafulla and Neelakantan, Arvind and Shyam, Pranav and Sastry, Girish and Askell, Amanda and Agarwal, Sandhini and Herbert-Voss, Ariel and Krueger, Gretchen and Henighan, Tom and Child, Rewon and Ramesh, Aditya and Ziegler, Daniel M. and Wu, Jeffrey and Winter, Clemens and Hesse, Christopher and Chen, Mark and Sigler, Eric and Litwin, Mateusz and Gray, Scott and Chess, Benjamin and Clark, Jack and Berner, Christopher and McCandlish, Sam and Radford, Alec and Sutskever, Ilya and Amodei, Dario},
title = {Language models are few-shot learners},
year = {2020},
isbn = {9781713829546},
publisher = {Curran Associates Inc.},
address = {Red Hook, NY, USA},
booktitle = {Proceedings of the 34th International Conference on Neural Information Processing Systems},
articleno = {159},
numpages = {25},
location = {Vancouver, BC, Canada},
series = {NIPS '20}
}

@inproceedings{dong-etal-2024-survey,
    title = "A Survey on In-context Learning",
    author = "Dong, Qingxiu  and
      Li, Lei  and
      Dai, Damai  and
      Zheng, Ce  and
      Ma, Jingyuan  and
      Li, Rui  and
      Xia, Heming  and
      Xu, Jingjing  and
      Wu, Zhiyong  and
      Chang, Baobao  and
      Sun, Xu  and
      Li, Lei  and
      Sui, Zhifang",
    editor = "Al-Onaizan, Yaser  and
      Bansal, Mohit  and
      Chen, Yun-Nung",
    booktitle = "Proceedings of the 2024 Conference on Empirical Methods in Natural Language Processing",
    month = nov,
    year = "2024",
    address = "Miami, Florida, USA",
    publisher = "Association for Computational Linguistics",
    url = "https://aclanthology.org/2024.emnlp-main.64/",
    doi = "10.18653/v1/2024.emnlp-main.64",
    pages = "1107--1128"
}

@inproceedings{10.5555/3692070.3692580,
author = {Falck, Fabian and Wang, Ziyu and Holmes, Chris},
title = {Is in-context learning in large language models {Bayesian}? a martingale perspective},
year = {2024},
publisher = {JMLR.org},
booktitle = {Proceedings of the 41st International Conference on Machine Learning},
articleno = {510},
numpages = {22},
location = {Vienna, Austria},
series = {ICML'24}
}

@misc{xie2022explanationincontextlearningimplicit,
      title={An Explanation of In-context Learning as Implicit {Bayesian} Inference}, 
      author={Sang Michael Xie and Aditi Raghunathan and Percy Liang and Tengyu Ma},
      year={2022},
      eprint={2111.02080},
      archivePrefix={arXiv},
      primaryClass={cs.CL},
      url={https://arxiv.org/abs/2111.02080}, 
}

@article{thompson,
 ISSN = {00063444},
 URL = {http://www.jstor.org/stable/2332286},
 author = {William R. Thompson},
 journal = {Biometrika},
 number = {3/4},
 pages = {285--294},
 publisher = {[Oxford University Press, Biometrika Trust]},
 title = {On the Likelihood that One Unknown Probability Exceeds Another in View of the Evidence of Two Samples},
 urldate = {2025-09-11},
 volume = {25},
 year = {1933}
}

@article{bandit_book,
  author = {Lattimore, Tor and Szepesvari, Csaba},
  description = {Bandit Algorithms},
  title = {Bandit Algorithms},
  url = {https://tor-lattimore.com/downloads/book/book.pdf},
  year = 2017
}

@INPROCEEDINGS{10154288,
  author={Hammar, Kim and Stadler, Rolf},
  booktitle={NOMS 2023-2023 IEEE/IFIP Network Operations and Management Symposium},
  title={Digital Twins for Security Automation},
  year={2023},
  volume={},
  number={},
  pages={1-6},
  doi={10.1109/NOMS56928.2023.10154288}}

@article{elazar-etal-2021-measuring,
    title = "Measuring and Improving Consistency in Pretrained Language Models",
    author = {Elazar, Yanai  and
      Kassner, Nora  and
      Ravfogel, Shauli  and
      Ravichander, Abhilasha  and
      Hovy, Eduard  and
      Sch{\"u}tze, Hinrich  and
      Goldberg, Yoav},
    editor = "Roark, Brian  and
      Nenkova, Ani",
    journal = "Transactions of the Association for Computational Linguistics",
    volume = "9",
    year = "2021",
    address = "Cambridge, MA",
    publisher = "MIT Press",
    url = "https://aclanthology.org/2021.tacl-1.60/",
    doi = "10.1162/tacl_a_00410",
    pages = "1012--1031"
}

@misc{tayebati2025learningconformalabstentionpolicies,
      title={Learning Conformal Abstention Policies for Adaptive Risk Management in Large Language and Vision-Language Models}, 
      author={Sina Tayebati and Divake Kumar and Nastaran Darabi and Dinithi Jayasuriya and Ranganath Krishnan and Amit Ranjan Trivedi},
      year={2025},
      eprint={2502.06884},
      archivePrefix={arXiv},
      primaryClass={cs.LG},
      url={https://arxiv.org/abs/2502.06884}, 
}

@misc{moeini2025surveyincontextreinforcementlearning,
      title={A Survey of In-Context Reinforcement Learning}, 
      author={Amir Moeini and Jiuqi Wang and Jacob Beck and Ethan Blaser and Shimon Whiteson and Rohan Chandra and Shangtong Zhang},
      year={2025},
      eprint={2502.07978},
      archivePrefix={arXiv},
      primaryClass={cs.LG},
      url={https://arxiv.org/abs/2502.07978}, 
}

@techreport{ISC2_2024_Workforce_Study,
  author       = {{ISC2}},
  title        = {2024 {Cybersecurity} Workforce Study},
  institution  = {ISC2},
  year         = {2024},
  month        = {October},
  url          = {https://www.isc2.org/Insights/2024/10/ISC2-2024-Cybersecurity-Workforce-Study},
  note         = {Based on an online survey of 15,852 cybersecurity professionals conducted in April–May 2024 in collaboration with Forrester Research\, Inc.; accessed 2025-09-07},
}

@online{google_ai_sec,
  author       = {Google},
  title        = {Introducing {Google} Security Operations: Intel-driven, {AI}-powered {SecOps}},
  year         = {2024},
  month        = {May},
  day          = {7},
  publisher    = {Google}
}

@INPROCEEDINGS{10575237,
  author={Van Tu, Nguyen and Yoo, Jae-Hyoung and Hong, James Won-Ki},
  booktitle={NOMS 2024-2024 IEEE Network Operations and Management Symposium},
  title={Towards Intent-based Configuration for Network Function Virtualization using In-context Learning in Large Language Models},
  year={2024},
  volume={},
  number={},
  pages={1-8},
  doi={10.1109/NOMS59830.2024.10575237}}

@INPROCEEDINGS{11073595,
  author={Dzeparoska, Kristina and Leon-Garcia, Alberto},
  booktitle={NOMS 2025-2025 IEEE Network Operations and Management Symposium},
  title={{KPI} Assurance and {LLM}s for Intent-Based Management},
  year={2025},
  volume={},
  number={},
  pages={1-9},
  doi={10.1109/NOMS57970.2025.11073595}}

@INPROCEEDINGS{11073607,
  author={Nam, Sukhyun and Van Tu, Nguyen and Hong, James Won-Ki},
  booktitle={NOMS 2025-2025 IEEE Network Operations and Management Symposium},
  title={{LLM}-Based {AI} Agent for {VNF} Deployment in {OpenStack} Environment},
  year={2025},
  volume={},
  number={},
  pages={1-7},
  doi={10.1109/NOMS57970.2025.11073607}}

@INPROCEEDINGS{11073741,
  author={Nickel, Lucas Immanuel and Hohmann, Lorenz and Stolbov, Nick and Gerstacker, Lukas and Rieger, Sebastian},
  booktitle={NOMS 2025-2025 IEEE Network Operations and Management Symposium},
  title={Integrating {LLMs} with {NetBox} and {Netmiko} for Vendor-Agnostic Intent-Based Networking},
  year={2025},
  volume={},
  number={},
  pages={1-6},
  doi={10.1109/NOMS57970.2025.11073741}}

@misc{kalai2025languagemodelshallucinate,
      title={Why Language Models Hallucinate}, 
      author={Adam Tauman Kalai and Ofir Nachum and Santosh S. Vempala and Edwin Zhang},
      year={2025},
      eprint={2509.04664},
      archivePrefix={arXiv},
      primaryClass={cs.CL},
      url={https://arxiv.org/abs/2509.04664}, 
}

@inproceedings{10.5555/3767870.3767878,
author = {Kramer, Diana and Rosique, Lambert and Narotam, Ajay and Bursztein, Elie and Kelley, Patrick Gage and Thomas, Kurt and Woodruff, Allison},
title = {Integrating large language models into security incident response},
year = {2025},
isbn = {978-1-939133-51-9},
publisher = {USENIX Association},
address = {USA},
booktitle = {Proceedings of the Twenty-First USENIX Conference on Usable Privacy and Security},
articleno = {8},
numpages = {16},
location = {Seattle, WA, USA},
series = {SOUPS '25}
}

@inproceedings{10.1145/3719027.3744872,
author = {Deng, Gelei and Ou, Haoran and Liu, Yi and Zhang, Jie and Zhang, Tianwei and Liu, Yang},
title = {Oedipus: {LLM}-enchanced Reasoning {CAPTCHA} Solver},
year = {2025},
isbn = {9798400715259},
publisher = {Association for Computing Machinery},
address = {New York, NY, USA},
url = {https://doi.org/10.1145/3719027.3744872},
doi = {10.1145/3719027.3744872},
booktitle = {Proceedings of the 2025 ACM SIGSAC Conference on Computer and Communications Security},
pages = {6–20},
numpages = {15},
location = {Taipei, Taiwan},
series = {CCS '25}
}

@inproceedings{10.1145/3719027.3765219,
author = {Aly, Ahmed and Mansour, Essam and Youssef, Amr},
title = {{OCR-APT}: Reconstructing {APT} Stories from Audit Logs using Subgraph Anomaly Detection and {LLM}s},
year = {2025},
isbn = {9798400715259},
publisher = {Association for Computing Machinery},
address = {New York, NY, USA},
url = {https://doi.org/10.1145/3719027.3765219},
doi = {10.1145/3719027.3765219},
booktitle = {Proceedings of the 2025 ACM SIGSAC Conference on Computer and Communications Security},
pages = {261–275},
numpages = {15},
location = {Taipei, Taiwan},
series = {CCS '25}
}

@inproceedings{10.1145/3719027.3744855,
author = {Ji, Zimo and Wu, Daoyuan and Jiang, Wenyuan and Ma, Pingchuan and Li, Zongjie and Wang, Shuai},
title = {Measuring and Augmenting Large Language Models for Solving Capture-the-Flag Challenges},
year = {2025},
isbn = {9798400715259},
publisher = {Association for Computing Machinery},
address = {New York, NY, USA},
url = {https://doi.org/10.1145/3719027.3744855},
doi = {10.1145/3719027.3744855},
booktitle = {Proceedings of the 2025 ACM SIGSAC Conference on Computer and Communications Security},
pages = {603–617},
numpages = {15},
location = {Taipei, Taiwan},
series = {CCS '25}
}

@inproceedings {299740,
author = {Yuexin Li and Chengyu Huang and Shumin Deng and Mei Lin Lock and Tri Cao and Nay Oo and Hoon Wei Lim and Bryan Hooi},
title = {{KnowPhish}: Large Language Models Meet Multimodal Knowledge Graphs for Enhancing {Reference-Based} Phishing Detection},
booktitle = {33rd USENIX Security Symposium (USENIX Security 24)},
year = {2024},
isbn = {978-1-939133-44-1},
address = {Philadelphia, PA},
pages = {793--810},
url = {https://www.usenix.org/conference/usenixsecurity24/presentation/li-yuexin},
publisher = {USENIX Association},
month = aug
}
\end{document}